\newcommand{\meanstd}[2]{%
\makecell[c]{\num{#1}\\[-2pt]\scriptsize$\pm$\,\num{#2}}%
}
\newcommand{\runtime}[1]{%
  \num[round-precision=2]{#1}}
\newtheorem{theorem}{Theorem}
\newtheorem{lemma}{Lemma}
\newtheorem{definition}{Definition}
\newtheorem{corollary}{Corollary}
\newtheorem{remark}{Remark}
\newcommand\fro[1]{\| #1 \|_{\rm{F}}}
\newcommand\op[1]{\| #1 \|}
\newcommand\nuc[1]{\| #1 \|_{*}}
\newcommand\lzero[1]{\| #1 \|_{0}}
\newcommand\lone[1]{\| #1 \|_{\ell_1}}
\newcommand\linf[1]{\| #1 \|_{\ell_\infty}}
\newcommand\psitwo[1]{\| #1 \|_{\psi_2}}
\newcommand{\mat}[1]{\begin{bmatrix}#1 \\ \end{bmatrix}}
\newcommand{\inp}[2]{\langle #1,#2\rangle}
\def\calA{{\mathcal A}}
\def\calB{{\mathcal B}}
\def\calD{{\mathcal D}}
\def\calE{{\mathcal E}}
\def\calF{{\mathcal F}}
\def\calP{{\mathcal P}}
\def\calS{{\mathcal S}}
\def\calT{{\mathcal T}}
\def\calY{{\mathcal Y}}
\def\bcalT{{\boldsymbol{\mathcal T}}}
\def\bcalY{{\boldsymbol{\mathcal Y}}}
\def\bcalZ{{\boldsymbol{\mathcal Z}}}
\def\CC{{\mathbb C}}
\def\EE{{\mathbb E}}
\def\PP{{\mathbb P}}
\def\RR{{\mathbb R}}
\def\TT{{\mathbb T}}
\def\e{{\boldsymbol e}}
\def\f{{\boldsymbol f}}
\def\r{{\boldsymbol r}}
\def\x{{\boldsymbol x}}
\def\y{{\boldsymbol y}}
\def\A{{\boldsymbol A}}
\def\B{{\boldsymbol B}}
\def\F{{\boldsymbol F}}
\def\G{{\boldsymbol G}}
\def\M{{\boldsymbol M}}
\def\S{{\boldsymbol S}}
\def\T{{\boldsymbol T}}
\def\U{{\boldsymbol U}}
\def\V{{\boldsymbol V}}
\def\W{{\boldsymbol W}}
\def\X{{\boldsymbol X}}
\def\Y{{\boldsymbol Y}}
\def\rank{\textsf{rank}}
\def\diag{\textsf{diag}}
\def\supp{\textsf{supp}}
\def\sign{\textsf{sign}}
\def\conj{\textsf{conj}}
\def\dof{\textsf{dof}}
\def\re{\textsf{Re}}
\def\im{\textsf{Im}}
\def\bSigma{{\boldsymbol \Sigma}}
\def\name{\textsf{FLoST}}
\title{Fourier Low-rank and Sparse Tensor for Efficient Tensor Completion}
\author{
	Jingyang Li, Jiuqian Shang and Yang Chen \\
	University of Michigan, Ann Arbor 
}
\date{}
\begin{document}

\maketitle

\begin{abstract}
Tensor completion is crucial in many scientific domains with missing data problems. Traditional low-rank tensor models, including CP, Tucker, and Tensor-Train, exploit low-dimensional structures to recover missing data. However, these methods often treat all tensor modes symmetrically, failing to capture the unique spatiotemporal patterns inherent in scientific data, where the temporal component exhibits both low-frequency stability and high-frequency variations. To address this, we propose a novel model, \underline{F}ourier \underline{Lo}w-rank and \underline{S}parse \underline{T}ensor (\name), which decomposes the tensor along the temporal dimension using a Fourier transform. This approach captures low-frequency components with low-rank matrices and high-frequency fluctuations with sparsity, resulting in a hybrid structure that efficiently models both smooth and localized variations. Compared to the well-known tubal-rank model, which assumes low-rankness across all frequency components, \name~ requires significantly fewer parameters, making it computationally more efficient, particularly when the time dimension is large. Through theoretical analysis and empirical experiments, we demonstrate that \name~ outperforms existing tensor completion models in terms of both accuracy and computational efficiency, offering a more interpretable solution for spatiotemporal data reconstruction.
\end{abstract}

\section{Introduction}
\label{sec:Introduction}
Tensor completion plays a critical role in many scientific and engineering domains, such as visual data in-painting \cite{liu2012tensor}, medical imaging \cite{wang2020learning},  to name a few. In the natural sciences, one prominent example is the recovery of total electron content (TEC) maps \cite{rideout2006automated}, which represent ionospheric electron distributions across space and time. Due to limitations in sensor coverage and data transmission, only a subset of the full TEC tensor is observed at any given time. Accurate completion of these tensors is essential for downstream tasks such as the Positioning, Navigation and Timing (PNT) services.

To tackle the problem of tensor completion, a common strategy is to exploit low-rank structures that capture the intrinsic correlations within the tensor. Several low-rank tensor models have been proposed in the literature, including CP rank \cite{jain2014provable,cai2022nonconvex}, Tucker rank \cite{xia2019polynomial,xia2021statistically,li2023online}, Tensor-Train (TT) rank \cite{cai2022provable,cai2022tensor}, and tubal-rank \cite{zhang2016exact,liu2019low,jiang2019robust,song2023riemannian}. 
By leveraging various low-dimensional structures, these models enable accurate recovery of the underlying tensor from partial observations.

Selecting an appropriate low-rank structure remains a central challenge in tensor completion. Existing formats such as CP, Tucker, and TT typically treat all tensor modes symmetrically, without distinguishing between spatial and temporal dimensions. In our motivating TEC completion task, each tube (i.e., lateral slice along the time dimension) represents the temporal evolution of measurements at a fixed spatial location. In most regions, ionospheric activity is relatively stable and varies slowly, corresponding to stationary low-frequency patterns. In contrast, certain regions affected by strong solar or geomagnetic disturbances exhibit rapid and abrupt fluctuations. While the well-known tubal-rank structure does incorporate temporal information by operating in the Fourier domain, it still assumes that every frequency slice is low-rank. This assumption introduces unnecessary redundancy, especially for TEC data, where only the low-frequency components are structured and the high-frequency components are sparse and localized.

\begin{figure}[t]
    \centering
     \includegraphics[width=0.8\linewidth]{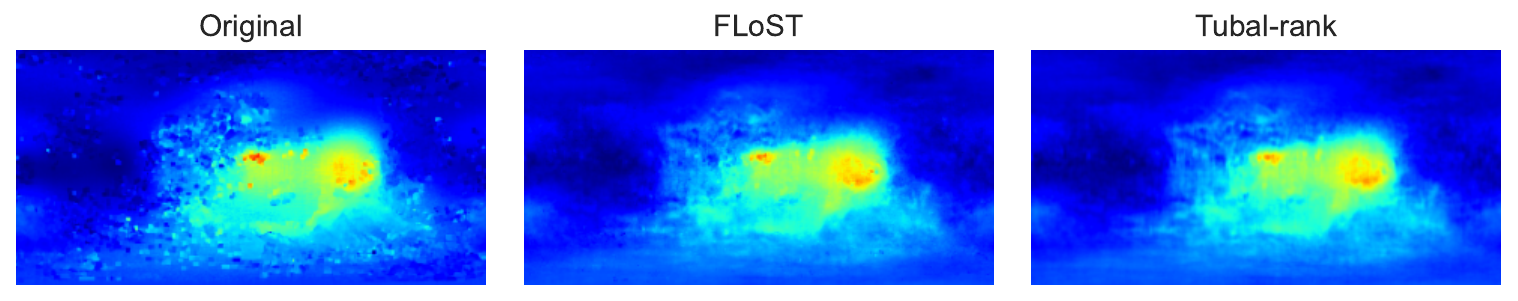}
    \caption{Comparison of truncation results using the proposed \name~model and the tubal-rank model on TEC data.}
    \label{fig:truncation}
\end{figure}

To address this limitation, we propose a novel model: \underline{F}ourier \underline{Lo}w-rank and \underline{S}parse \underline{T}ensor (\name). The key idea is to decompose the tensor along the temporal mode via a Fourier transform, yielding frequency-domain slices that are naturally ordered by temporal frequency. The initial slices capture smooth, slowly varying components, while the later ones encode rapid, high-frequency fluctuations. In \name, we model the low-frequency slices using low-rank matrices to efficiently represent stationary regions, and represent the high-frequency slices—corresponding to infrequent but sharp variations—using sparse matrices. This hybrid structure captures both the global smoothness of the TEC field and the localized temporal anomalies in a flexible and compact way. Compared to the tubal-rank model, which assumes low-rankness across all frequency components, \name~ requires significantly fewer parameters, especially when the temporal dimension is large. 
For example, a typical TEC dataset collected over one month forms a tensor of size $181 \times 361 \times 8640$. In such cases, the tubal-rank model, which assumes low-rankness across all frequency slices, introduces substantial redundancy. In contrast, \name~requires significantly fewer parameters by focusing modeling capacity on the informative low-frequency components, resulting in improved efficiency and better alignment with the physical characteristics of the data.
A visual comparison of truncation results shows that \name~ effectively captures the essential spatiotemporal structures with far fewer parameters(see Figure~\ref{fig:truncation}).
Both theoretical analysis and empirical results demonstrate that \name~ outperforms existing tensor formats in terms of accuracy and efficiency.

\section{Methodology}
In this section, we introduce the proposed \name~ structure, which models low-frequency components as low-rank and high-frequency components as sparse. Based on this structure, we further develop an efficient estimator for recovering the underlying tensor from partial and noisy observations.
\subsection{Notations}
For a general third order tensor $\calT\in\RR^{M\times N \times T}$, we denote $\bcalT\in\RR^{M\times N\times T}$ as a result of fast Fourier transform of $\calT$ along the third dimension.
The multi-linear product between $\calT$ and matrix $M\in\RR^{L\times T}$, is defined by $[\calT\times_3 M]_{ijk} = \sum_{l=1}^T[\calT]_{ijl}M_{kl}$. Using the notation of multi-linear product, we can write $\bcalT = \calT\times_3\F$, where $\F\in\RR^{T\times T}$ is the discrete Fourier transform matrix, i.e., $[\F]_{ij} = \gamma^{(i-1)(j-1)}/\sqrt{T}$, with $\gamma = e^{-2\pi i/T}$.  We use $\f_l$ to denote the $l$-th column of $\F$, that is $\F = \mat{\f_1 &\cdots&\f_T}$. 
In the following, we would use bold-face letters to denote matrix/tensor in Fourier domain. 
For $\x\in\CC^T$, we define its $\ell_1$ norm as $\|\x\|_{\ell_1} = \sum_{i=1}^T\big(|\re(x_i)| + |\im(x_i)|\big)$, and $\|\x\|_{\ell_2} = \big(\sum_{i=1}^T|x_i|^2 \big)^{1/2}$, $\|\x\|_{\ell_{\infty}} = \max_{i=1}^T|x_i|$.
We also denote $\lzero{\calT}$ to be the number of non-zero entries of $\calT$. 
For a complex matrix $\M\in\CC^{M\times N}$, the spectral norm of $\M$ is denoted as $\op{\M} = \max_i\sigma_i(\M)$, where $\sigma_i(\M)$ are the singular values of $\M$, and $\nuc{\M} = \sum_i\sigma_i(\M)$, 
we also use $\conj(\M)$ to denote the conjugate of $\M$. 
We use $\fro{\cdots}$ to denote the Frobenius norm of matrices/tensors.
For a real number $x$, we use $\lceil x\rceil$ to denote the smallest integer that is greater than or equal to $x$. We denote $C, C_1, C_2, c, c_1, c_2, \cdots$ some absolute constants whose actual values might vary at different appearances. We also use $a\vee b$ to represent $\max\{a,b\}$ and $a\wedge b$ to represent $\min\{a,b\}$. 

\subsection{Fourier Low-rank and Sparse Tensor}
Among the various low-rank tensor models, the tubal-rank framework has gained significant attention due to its ability to exploit temporal structures by transforming a third-order tensor along its temporal mode via the discrete Fourier transform.
However, despite its advantage in leveraging temporal information, the tubal-rank model assumes all frequency components are low-rank, leading to unnecessary redundancy in real data, where typically only low-frequency components are structured and high-frequency components are sparse. This motivates us to propose a hybrid low-rank and sparse structure for more efficient frequency-domain modeling (see Figure \ref{fig:flost}). Since $\calT\in\RR^{M\times N\times T}$ is real-valued, its Fourier transform satisfies the conjugate symmetry property: $$\calT\times_3 \f_l^\top = \conj(\calT\times_3 \f_{T+2-l}^\top)$$ for $l=\lceil\frac{T+1}{2}\rceil+1,\cdots, T$. As a result, it suffices to define only the first $\lceil \tfrac{T+1}{2} \rceil$ frequency slices.
\begin{definition}
	We say a tensor $\calT\in\RR^{M\times N\times T}$ is $(\r,K,s)$-\name~ for $\r = (r_1,\cdots,r_K)$, if 
	\begin{align*}
		\rank(\calT\times_3 \f_l^\top)\leq r_l, ~l = 1,\cdots, K,\quad \lzero{\calT\times_3 \F_1^\top}\leq s,
	\end{align*}
	where $\F_1 = [\f_{K+1}, \cdots, \f_{\lceil\frac{T+1}{2}\rceil}]$ and $K\leq \lceil\frac{T+1}{2}\rceil$.
\end{definition}
\begin{figure}[H]
    \centering
    \includegraphics[width=0.75\linewidth]{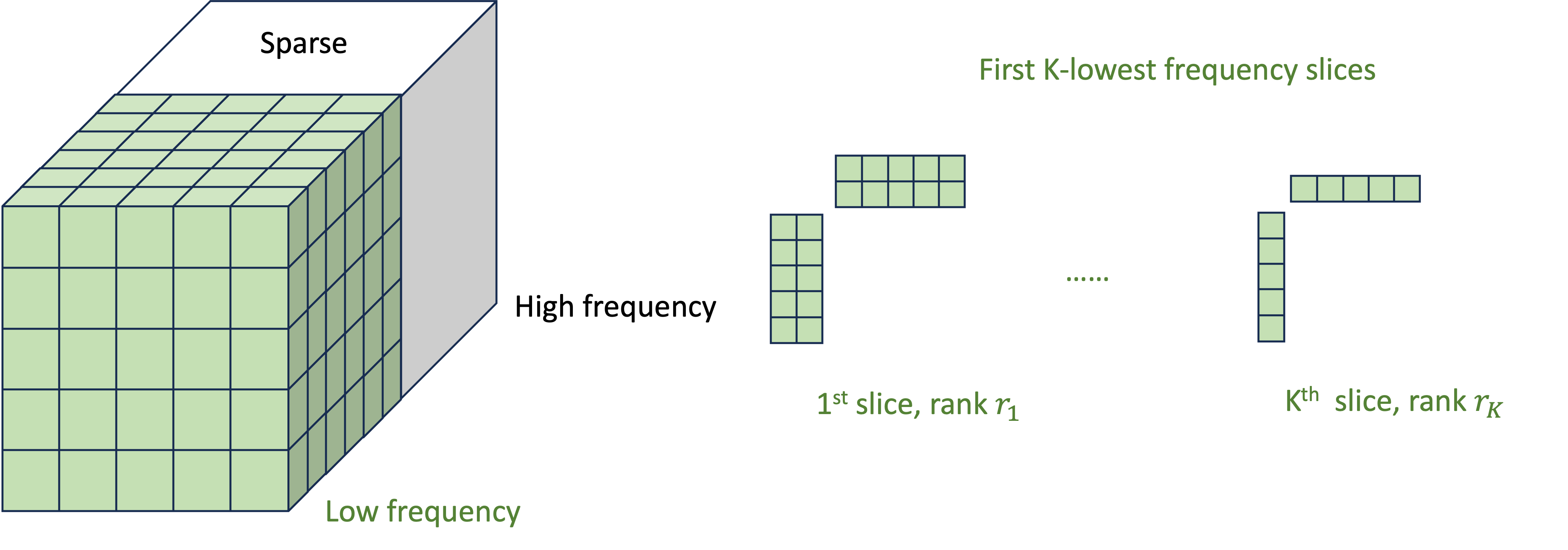}
    \caption{Structure of \name~ in frequency domain. }
    \label{fig:flost}
\end{figure}
Here, $\calT \times_3 \f_l^\top$ extracts the $l$-th frontal slice of the tensor in the frequency domain, and $\calT\times_3 \F_1^\top$ corresponds to the collection of high-frequency slices. 
\begin{remark}[Connections with Multi-rank and tubal-rank]
The proposed \name~ model generalizes several existing tensor structures. Specifically, when $K = \lceil\frac{T+1}{2}\rceil$, it reduces to the multi-rank model if the frequency slices are allowed to have different ranks, and the tubal-rank model if all frequency slices share the same rank \cite{zhang2016exact}.
\end{remark}

\subsection{\name~ for Tensor Completion} \label{sec:loss}
The goal of tensor completion is to recover an underlying tensor by only observing a small subset of its entries. 
Let $\calT_0\in\RR^{M\times N \times T}$ be a tensor that is $(\r,K,s)$-\name. 
We can obtain a collection of partially observed and contaminated entries where the sampling indicator $\omega_{ijt}=1$:
\begin{align*}
	\calY_{ijt} = [\calT_0]_{ijt}+\calE_{ijt}.
\end{align*}
Here $\calE$ represents the random noise independent of $\{\omega_{ijt}\}_{ijt}$. And we denote the subset $\Omega=\{(i,j,t): \omega_{ijt}=1\}$. 
In this paper, we consider i.i.d. Bernoulli sampling with $\omega_{ijt}\overset{\text{i.i.d.}}{\sim} \text{Ber}(p)$. 
We propose the following estimator:
\begin{align}\label{proposed-estimator}
	\hat\calT := \arg\min_{\calT\in\RR^{M\times N\times T}} \frac{1}{2}\fro{\calT - p^{-1}\calP_{\Omega}(\calY)}^2 + \sum_{l=1}^K\lambda_{1,l}\nuc{\calT\times_3\f_l^\top} + \lambda_2\lone{\calT\times_3\F_1^\top}.
\end{align}
Here the nuclear norm and $\ell_1$ norm are the regularizations for the low-rank and sparse structures, respectively. 

We incorporate the knowledge of sampling distribution in the construction of the estimator by considering the expectation of $p^{-1}\calP_{\Omega}(\calT)$. This loss has been used in the low-rank matrix completion literature \cite{koltchinskii2011nuclear,mao2019matrix} as well as in low-rank tensor completion \cite{xia2021statistically}. 
	An alternative loss function widely used in the literature of low-rank tensor completion replaces the first term in \eqref{proposed-estimator} with $\frac{1}{2}\fro{p^{-1}\calP_{\Omega}(\calT) - p^{-1}\calP_{\Omega}(\calY)}^2$ \cite{xia2019polynomial,cai2022nonconvex,cai2022provable,song2023riemannian}. 
	
	The two approaches each have their advantages and limitations. Incorporating the sampling distribution into the estimator can yield more efficient estimators with closed-form expressions. In contrast, using the empirical loss typically leads to statistically optimal estimators but requires iterative algorithms and incurs higher computational cost. Since our primary focus is on computational efficiency for large-scale tensors, we adopt the first approach at the expense of a slight loss in statistical accuracy. We will discuss this trade-off in more detail shortly.

\subsection{Explicit Solution and Efficient Computations}
Now we discuss how $\hat\calT$ can be efficiently computed from \eqref{proposed-estimator}. While seemingly, the data-fitting term and the regularizations and in the original domain and Fourier domain, respectively, we can consider the problem in Fourier domain using the identity: $$\fro{\calT - p^{-1}\calP_{\Omega}(\calY)}^2 = \fro{\big(\calT - p^{-1}\calP_{\Omega}(\calY)\big)\times_3\F^\top}^2 = \sum_{t=1}^T\fro{\big(\calT - p^{-1}\calP_{\Omega}(\calY)\big)\times_3\f_l^\top}^2.$$
Then the original problem in \eqref{proposed-estimator} can be equivalently written as $K+1$ independent problems in the Fourier domain: 
\begin{subequations} 
	\begin{align}
			\hat\T_l &= \arg\min_{\T} \frac{1}{2}\fro{\T - p^{-1}\calP_{\Omega}(\calY)\times_3\f_l^\top}^2 + \lambda_{1,l}\nuc{\T_l}, \quad l = 1,\cdots, K\label{subprob:lowrank}\\
		\hat\bcalT_1 &= \arg\min_{\bcalT}\frac{1}{2}\fro{\bcalT - p^{-1}\calP_{\Omega}(\calY)\times_3\F_1^\top}^2 + \lambda_2\lone{\bcalT}. \label{subprob:sparse}
	\end{align}
\end{subequations}
Once we obtain $\hat\T_l, \hat\bcalT_1$, we first concatenate these matrices/tensor, then apply the inverse Fourier transform to recover the estimator for $\calT_0$:
\begin{subequations} 
\begin{align}
	\hat\bcalT &= \sum_{l=1}^K \hat\T_l\times_3\e_l + \hat\bcalT_1\times_3\mat{\e_{K+1}&\cdots&\e_{\lceil\frac{T+1}{2}\rceil}}\label{concatenateT}\\
	\hat\bcalT(:,:,l) &= \conj\big(\hat\bcalT(:,:,T+2-l)\big), \quad l = \lceil\frac{T+1}{2}\rceil+1, \cdots, T, \label{symmetrization}\\
	\hat\calT &= \hat\bcalT\times_3\conj(\F). \notag
\end{align}
\end{subequations}
For \eqref{subprob:lowrank}, we can use singular value shrinkage \cite{cai2010singular}, which admits a closed-form solution. Let the singular value soft-thresholding (SVT) operator:
\begin{align*}
	\calD_{\tau}(\M) = \U\diag(\{(\sigma_i-\tau)_+\})\V^*,
\end{align*}
where $\M$ admits the singular value decomposition $\M = \U\bSigma\V^*$ with $\bSigma = \diag(\{\sigma_i\})$, and $x_+ = \max\{x,0\}$. Then from \cite[Theorem 2.1]{cai2010singular}, we have 
\begin{align}\label{computeTl}
	\hat\T_l = \calD_{\lambda_{1,l}}\big(p^{-1}\calP_{\Omega}(\calY)\times_3\f_l^\top\big). 
\end{align}
On the other hand, for \eqref{subprob:sparse} we can compute $\hat\bcalT$ in closed-form entry-wisely.
Define $\calF_{\tau}(x) = (x_1-\tau)_++ i(x_2-\tau)_+$ for $x = x_1+ix_2\in\CC$. 
Then 
\begin{align}\label{computesparse}
	[\hat\bcalT_1]_{ijt} = \calF_{\lambda_2}\big([p^{-1}\calP_{\Omega}(\calY)\times_3\F_1^\top]_{ijt}\big).
\end{align}
We summarize the procedures for solving \eqref{proposed-estimator} in Algorithm \ref{alg:main}. Notice this algorithm can be performed in parallel. 
The computational cost of this algorithm is $O\big(MN(M\vee N)K + MN(T-K)\big)$. 
We also point out that our algorithm is also applicable to the low tubal-rank tensors, but with a computational cost of order $O\big(MN(M\vee N)T\big)$.
Since the number of low-rank slices $K$ is typically much smaller than the time dimension $T$, the proposed \name~ structure offers significant computational efficiency compared to the tubal-rank model.
\begin{algorithm}
	\caption{Parallel Estimation of $\mathcal{T}_0$}
	\begin{algorithmic}[1]
		\State \textbf{Input:} Observed data $\calP_{\Omega}(\calY)$, $p$, $\{\lambda_{1,l}\}_{l=1}^K$, $\lambda_2$
		\State \textbf{Output:} Estimator $\hat\calT_0$
		 
		\For{$l = 1$ to $K+1$ \textbf{in parallel}}\Comment{This step can be parallelized}
		\If{$l \leq K$}
		\State Compute $\hat\T_l$ via \eqref{computeTl}: $\hat\T_l = \calD_{\lambda_{1,l}}\big(p^{-1}\calP_{\Omega}(\calY)\times_3\f_l^\top\big)$
		\Else
		\State Compute $\hat\bcalT_1$ via \eqref{computesparse}: $\hat\bcalT_1 = \calF_{\lambda_2}\big(p^{-1}\calP_{\Omega}(\calY)\times_3\F_1^\top\big)$
		\EndIf
		\EndFor
		
		\State Aggregate $\hat\T_l$ and $\hat\bcalT_1$, then apply symmetrization to form $\hat\bcalT$ as in \eqref{concatenateT}, \eqref{symmetrization}
		\State Apply inverse Fourier transform to obtain $\hat\calT_0$: $\hat\calT_0 = \hat\bcalT\times_3\conj(\F)$
	\end{algorithmic}
\label{alg:main}
\end{algorithm}

\section{Theoretical Guarantee}
In this section, we present the theoretical guarantees for the proposed estimator in \eqref{proposed-estimator}. Our analysis establishes a sharp upper bound on the estimation error for \name~ tensor completion, which also improves existing results for low tubal-rank tensor estimation. The following Theorem \ref{thm:main} is proved in Section \ref{pf:thm:main}.
\begin{theorem}\label{thm:main}
	Assume the noise $\calE_{ijt}$ are independently distributed centered sub-Gaussian random variables with proxy variance $\sigma^2$, and $\max_{ijt}|[\calT_0]_{ijt}|\leq \gamma$. 
	If we choose 
	\begin{align*}
		\lambda_{1,l}&\geq C_1(\sigma\vee\gamma)\bigg(\frac{\sqrt{(N\vee M)\log(M\vee N)}}{\sqrt{p}} + \frac{\sqrt{\log^3(M\vee N)}}{p\sqrt{T}}\bigg),\\
		\lambda_2 &\geq C_2(\sigma\vee\gamma)\bigg(\frac{\sqrt{\log(M\vee N\vee T)}}{\sqrt{p}} + \frac{\log(M\vee N\vee T)}{\sqrt{T}p}\bigg).
	\end{align*}
	Then we have 
	\begin{align*}
	\fro{\hat\calT - \calT_0}^2 \leq 16(\sum_{l=1}^K\lambda_{1,l}^2r_l + \lambda_2^2s)
	\end{align*}
	with probability exceeding $1-2(M\vee N\vee T)^{-10}$.
\end{theorem}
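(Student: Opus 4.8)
The plan is to exploit the exact Fourier-domain decoupling in \eqref{subprob:lowrank}--\eqref{subprob:sparse}: reduce $\fro{\hat\calT-\calT_0}^2$ to a sum of errors over the $K+1$ independent proximal subproblems, and control each subproblem by a deterministic, regularization-dependent bound that activates once $\lambda_{1,l}$ and $\lambda_2$ dominate the relevant norm of the effective noise. First I would set $\calZ := p^{-1}\calP_{\Omega}(\calY) - \calT_0$ and split it as $\calZ = \calZ^{(1)} + \calZ^{(2)}$, with $[\calZ^{(1)}]_{ijt} = (p^{-1}\omega_{ijt}-1)[\calT_0]_{ijt}$ the sampling error and $[\calZ^{(2)}]_{ijt} = p^{-1}\omega_{ijt}\calE_{ijt}$ the measurement noise; both are entrywise independent and mean-zero. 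Since $\F$ is unitary, Parseval's identity gives $\fro{\hat\calT - \calT_0}^2 = \sum_{l=1}^T\fro{(\hat\calT-\calT_0)\times_3\f_l^\top}^2$. By the construction \eqref{concatenateT}--\eqref{symmetrization} the frequency slices of $\hat\calT$ are $\hat\T_l$ for $l\le K$, the slices of $\hat\bcalT_1$ for $K<l\le\lceil\frac{T+1}{2}\rceil$, and conjugates on the upper half; the identical decomposition holds for $\calT_0$ with $\rank(\T_{0,l})\le r_l$ where $\T_{0,l}=\calT_0\times_3\f_l^\top$, and $\lzero{\bcalT_{0,1}}\le s$ where $\bcalT_{0,1}=\calT_0\times_3\F_1^\top$. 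Because conjugation preserves Frobenius norm, each non-self-conjugate slice contributes twice, yielding $\fro{\hat\calT-\calT_0}^2 \le 2\big(\sum_{l=1}^K\fro{\hat\T_l-\T_{0,l}}^2 + \fro{\hat\bcalT_1-\bcalT_{0,1}}^2\big)$.

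Next I would invoke the deterministic guarantees for the two proximal maps. For the SVT solution \eqref{computeTl}, writing $\E_l = \calZ\times_3\f_l^\top$ so that $\hat\T_l = \calD_{\lambda_{1,l}}(\T_{0,l}+\E_l)$, the standard soft-thresholding estimate shows that on the event $\lambda_{1,l}\ge 2\op{\E_l}$ one has $\fro{\hat\T_l-\T_{0,l}}^2 \le 8\lambda_{1,l}^2 r_l$. For the entrywise soft-thresholding \eqref{computesparse} with $\E_{F_1}=\calZ\times_3\F_1^\top$, the analogous lasso bound gives $\fro{\hat\bcalT_1-\bcalT_{0,1}}^2\le 8\lambda_2^2 s$ on the event $\lambda_2\ge 2\linf{\E_{F_1}}$, where $\linf{\cdot}$ denotes the entrywise maximum. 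The whole argument therefore reduces to verifying that the stated choices of $\lambda_{1,l},\lambda_2$ meet these thresholds with the claimed probability.

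The main obstacle is the high-probability control of $\max_{l\le K}\op{\calZ\times_3\f_l^\top}$. I would expand $\E_l = \sum_{i,j,t}\calZ_{ijt}[\f_l]_t\,\e_i\e_j^\top$ as a sum of independent mean-zero rank-one matrices with $|[\f_l]_t|=1/\sqrt T$, and apply the matrix Bernstein inequality. Computing $\EE[\E_l\E_l^*]$ and $\EE[\E_l^*\E_l]$ (using $\EE[\calZ_{ijt}^2]\le(\sigma^2+\gamma^2)/p$) yields a matrix variance proxy $v\lesssim (M\vee N)(\sigma^2\vee\gamma^2)/p$, which produces the first term $(\sigma\vee\gamma)\sqrt{(M\vee N)\log(M\vee N)/p}$. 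Because the measurement noise is only sub-Gaussian, I would truncate $\calE$ at level $\asymp\sigma\sqrt{\log(M\vee N)}$ before applying Bernstein and bound the discarded tail separately; the per-summand bound $R\asymp(\gamma\vee\sigma\sqrt{\log(M\vee N)})/(p\sqrt T)$ then drives the Bernstein bounded-term contribution $\asymp\log^{3/2}(M\vee N)(\sigma\vee\gamma)/(p\sqrt T)$, exactly the second term of $\lambda_{1,l}$, and a union bound over $l=1,\dots,K$ costs only a log factor absorbed into the constant. The sparse part is easier: each entry $[\E_{F_1}]_{ijk}=\sum_t\calZ_{ijt}[\f_{K+k}]_t$ is a scalar sum whose sampling component I control by scalar Bernstein and whose sub-Gaussian component I handle directly as a weighted sub-Gaussian sum (no matrix truncation, hence only $\log$ rather than $\log^{3/2}$); a union bound over the $\lesssim MNT$ entries supplies the $\log(M\vee N\vee T)$ factors in $\lambda_2$.

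Finally I would intersect the events $\{\lambda_{1,l}\ge 2\op{\E_l}\}$ for $l\le K$ and $\{\lambda_2\ge 2\linf{\E_{F_1}}\}$, calibrating the tail exponents so the total failure probability is at most $2(M\vee N\vee T)^{-10}$, and substitute the deterministic bounds into the doubled Parseval decomposition. The per-subproblem constant $8$ together with the factor $2$ from conjugate symmetry then gives $\fro{\hat\calT-\calT_0}^2\le 16(\sum_{l=1}^K\lambda_{1,l}^2 r_l + \lambda_2^2 s)$. I expect the delicate point to be the truncation bookkeeping in the matrix Bernstein step, since it simultaneously produces the second term of $\lambda_{1,l}$ and must be made uniform over the $K$ frequency slices.
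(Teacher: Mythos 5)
Your proposal is correct in substance, but it takes a genuinely different route from the paper's proof. The paper never uses the closed-form solutions \eqref{computeTl}--\eqref{computesparse} at all: it runs a single oracle inequality on the full convex program \eqref{proposed-estimator} (comparing the objective at $\hat\calT$ and at $\calT_0$), splits the cross term $\inp{\hat\calT-\calT_0}{p^{-1}\calP_{\Omega}\calE+p^{-1}\calP_{\Omega}\calT_0-\calT_0}$ across frequency slices by Parseval, bounds each slice via tangent-space/support decompositions together with the subgradient inequalities of Lemmas \ref{lemma:lowrank:subgrad} and \ref{lemma:sparse:subgrad}, and closes with Cauchy--Schwarz to produce the constant $16$ in one shot. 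The stochastic ingredient is identical to yours --- the events $\lambda_{1,l}\gtrsim\op{\Y_l}$ and $\lambda_2\gtrsim\linf{\bcalY_1}$ of Lemma \ref{lemma:Y} --- though the paper invokes the sub-exponential matrix Bernstein inequality of \cite{koltchinskii2011nuclear} directly instead of your truncate-then-Bernstein argument; both yield the same $\log^{3/2}$ term. Your per-slice strategy (quote the SVT denoising oracle bound, e.g.\ Theorem 1 of \cite{koltchinskii2011nuclear}, and the entrywise soft-thresholding bound, then aggregate with the conjugate-symmetry factor $2$) is more modular and makes the provenance of each term in the error bound transparent. The one point you must track is your identification of the Fourier slices of $\hat\calT$ --- the minimizer of \eqref{proposed-estimator} over \emph{real} tensors --- with the prox solutions \eqref{computeTl}--\eqref{computesparse}: under the realness (conjugate-symmetry) constraint, the data-fit of a non-self-conjugate slice $l$ appears twice in the Parseval expansion while its regularizer appears once, so the exact minimizer's slice is $\calD_{\lambda_{1,l}/2}$ of the data slice rather than $\calD_{\lambda_{1,l}}$; the decoupling asserted in Section 2.4 is exact only up to this reweighting. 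This does not sink your argument --- redoing your per-slice bound with threshold $\lambda_{1,l}/2$ on the event $\lambda_{1,l}\geq 4\op{\Y_l}$ still lands below $16\big(\sum_{l=1}^K\lambda_{1,l}^2 r_l+\lambda_2^2 s\big)$ after the factor-$2$ aggregation, at the cost of enlarging $C_1,C_2$ --- but it must be stated, and it is precisely the step the paper's variational argument is immune to, since that argument never touches the closed form. Finally, make the union-bound bookkeeping explicit: the $K$ slice events and the $\lesssim MNT$ entrywise events should each be given failure probability $(M\vee N\vee T)^{-20}$, as in Lemma \ref{lemma:Y}, so that the total stays below $2(M\vee N\vee T)^{-10}$.
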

The assumption on the noise and the upper bound on the tensor are standard and are widely considered in the low-rank matrix/tensor completion \cite{koltchinskii2011nuclear,xia2021statistically,cai2022provable}. 
We notice that the upper bound of the Frobenius norm estimation error $\fro{\hat\calT-\calT_0}^2$ depends on the choice of $\lambda_{1,l},\lambda_2$, and the upper bound is minimized when $\lambda_{1,l},\lambda_2$ match the lower bound in the theorem. We now give a more detailed interpretation of the result.
We write $p = \frac{n}{MNT}$, where $n = p\cdot MNT$ is the expected number of observation. Then with the smallest possible choice of $\lambda_{1,l}, \lambda_2$, we have 
\begin{align*}
	\frac{1}{MNT}\fro{\hat\calT - \calT_0}^2 \leq \tilde C\frac{\sum_{l=1}^K(M\vee N)r_l + s}{n}(\sigma\vee\gamma)^2 + \tilde C\frac{MN(\sum_{l=1}^Kr_l+s)}{n^2}(\sigma\vee\gamma)^2, 
\end{align*}
where $\tilde C$ depends only on $\log(M\vee N \vee T)$. 
\begin{corollary}\label{coro}
	Under the setting of Theorem \ref{thm:main}, assume $$n\geq \max\bigg\{(M\wedge N), \min\big\{(M\wedge N)\frac{s}{\sum_l r_l}, MN\big\}\bigg\},$$ we conclude 
	\begin{align*}
			\frac{1}{MNT}\fro{\hat\calT - \calT_0}^2 \leq \tilde C\frac{\sum_{l=1}^K(M\vee N)r_l + s}{n}(\sigma\vee\gamma)^2
	\end{align*}
	holds with probability exceeding $1-2(M\vee N\vee T)^{-10}$.
\end{corollary}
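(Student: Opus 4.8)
The plan is to take as the starting point the two-term error bound displayed immediately before the statement, which itself follows from Theorem~\ref{thm:main} by inserting the smallest admissible choices of $\lambda_{1,l}$ and $\lambda_2$ (the error bound $16(\sum_l\lambda_{1,l}^2 r_l+\lambda_2^2 s)$ is increasing in each regularization parameter, so equality in the stated lower bounds is optimal), applying $(a+b)^2\le 2a^2+2b^2$ to separate the two summands inside each parameter, and substituting $p=n/(MNT)$ so that every polylogarithmic factor is absorbed into $\tilde C$. Thus I would record
\[
\frac{1}{MNT}\fro{\hat\calT-\calT_0}^2 \le \tilde C(\sigma\vee\gamma)^2\left(\frac{(M\vee N)\sum_l r_l + s}{n} + \frac{MN(\sum_l r_l + s)}{n^2}\right),
\]
and the whole content of the corollary is the deterministic claim that, under the stated lower bound on $n$, the second (order $1/n^2$) term is absorbed by the first. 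The probability $1-2(M\vee N\vee T)^{-10}$ is inherited verbatim from the theorem, so only an algebraic comparison remains.

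It therefore suffices to establish
\[
\frac{MN\big(\sum_l r_l + s\big)}{n} \;\le\; 2\Big((M\vee N)\sum_l r_l + s\Big),
\]
after which the second term is at most twice the first and the two merge into $\tilde C$. I would split the left-hand side into an $r$-part and an $s$-part, bounding each against the full right-hand side, and use the factorization $MN=(M\wedge N)(M\vee N)$ throughout. For the $r$-part, $\frac{MN\sum_l r_l}{n}=\frac{M\wedge N}{n}\,(M\vee N)\sum_l r_l\le (M\vee N)\sum_l r_l$ precisely when $n\ge M\wedge N$, which is guaranteed by the first entry of the maximum in the hypothesis.

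The $s$-part is where the inner $\min$ in the hypothesis does its work, and I would treat it by a case split according to which argument attains that minimum. If $(M\wedge N)\frac{s}{\sum_l r_l}\le MN$ — equivalently $s\le (M\vee N)\sum_l r_l$ — then $n\ge (M\wedge N)\frac{s}{\sum_l r_l}$ gives $\frac{MN s}{n}\le \frac{MN s\sum_l r_l}{(M\wedge N)s}=(M\vee N)\sum_l r_l$; in the complementary regime $s>(M\vee N)\sum_l r_l$ the minimum equals $MN$, so $n\ge MN$ yields $\frac{MN s}{n}\le s$. Either way $\frac{MN s}{n}\le (M\vee N)\sum_l r_l + s$, and combining with the $r$-part gives the displayed inequality. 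I do not expect a genuine obstacle: the argument is bookkeeping, and the only point needing care is matching the two branches of the $\min$ to the correct regime of $s$ relative to $(M\vee N)\sum_l r_l$, so that the branch of the lower bound on $n$ that actually controls the $s$-term is the one being invoked.
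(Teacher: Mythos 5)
Your proposal is correct and follows exactly the route the paper intends: the corollary is the observation that, starting from the two-term bound displayed after Theorem \ref{thm:main} (obtained with the smallest admissible $\lambda_{1,l},\lambda_2$ and $p=n/(MNT)$), the sample-size condition makes the $MN(\sum_l r_l+s)/n^2$ term dominated by the $\big((M\vee N)\sum_l r_l+s\big)/n$ term, with the probability inherited verbatim from the theorem. Your case split on the inner $\min$ (matching $s\le (M\vee N)\sum_l r_l$ to the branch $n\ge (M\wedge N)s/\sum_l r_l$, and the complementary regime to $n\ge MN$) is exactly the bookkeeping the paper leaves implicit, and it is carried out correctly.
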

A special case arises when $K = \lceil \frac{T+1}{2} \rceil$, in which no sparsity is imposed in the model. In this setting, the proposed structure reduces to the standard low tubal-rank tensor completion problem. Our result then yields the following error bound:
\begin{align*}
    \frac{1}{MNT}\fro{\hat\calT - \calT_0}^2 \leq \tilde C\frac{(M\vee N)Tr_{\max}}{n}(\sigma\vee\gamma)^2, 
\end{align*}
where $r_{\max} = \max_{l=1}^{\lceil \frac{T+1}{2}\rceil} r_l$. This improves upon the existing bound in \cite{wang2019noisy}, which includes an additional $\sqrt{1/n} \gamma^2$ term. Moreover, our bound matches the minimax lower bound established in \cite[Theorem 2]{wang2019noisy}.

In the general case, the degrees of freedom of the proposed \name~ model are bounded by $\dof\leq \sum_{l=1}^K(M\vee N)r_l + s$. 
In practice, it suffices to choose $K\ll T$, and $s = O(1)$. 
Therefore, the sample complexity required in Corollary \ref{coro} is valid as long as $n\geq \dof$. 
Then the average error bound given by Corollary \ref{coro} can be written as 
\begin{align*}
    \frac{1}{MNT}\fro{\hat\calT - \calT_0}^2 \leq \tilde C\frac{(M\vee N)\sum_{l=1}^Kr_{l}+s}{n}(\sigma\vee\gamma)^2.
\end{align*}
Notably, this bound is independent of the time dimension $T$, and improves upon the tubal-rank error bound by a factor of $K/T$.

\section{Numerical Experiment} \label{sec:Exp}
In this section, we present numerical results that corroborate our theoretical findings. The experiments demonstrate that the proposed \name~estimator is both efficient and accurate, consistently outperforming the low tubal-rank model, particularly in capturing essential structures with fewer parameters. 
All experiments were conducted on a CPU-only node of an HPC cluster, with each run completing in under 3 CPU-hours and using less than 2.5 GiB of memory.
Throughout, we will use root‑mean‑square error (RMSE) on different sets $\Delta\subset[M]\times[N]\times[T]$ as measurements:
$$\text{RMSE}({\Delta}) =\frac{ \fro{\calP_{\Delta}(\hat\calT- \calT_0)}}{{\sqrt{|\Delta|}}},$$
where $\hat\calT$ refers to the estimators by different algorithms.

\subsection{Simulation Study}
We begin by presenting simulation results to evaluate our method on general tensor completion tasks and compare it with an existing low tubal-rank approach.
For each setting, we generate one ground‑truth tensor $\mathcal{T}_0\in\mathbb{R}^{M\times N\times T}$ that follows a specified $(r, K, s)$-\name~ structure ($(r,K,s)$-\textsf{FLoST} stands for the case when $r_1=\cdots=r_K=r$). 
The data generation proceeds as follows: we first draw a random tensor of size $M \times N\times T$ whose entries are i.i.d. standard normal, which is in general full rank. Then we take the Fourier transform along the third mode. For the first $K$ frequency slices, we truncate each slice to matrix rank $r$ using SVD. For slices $K+1$ through $\lceil (T+1)/2\rceil$, we keep only the $s$ largest entries and set the rest to zero to induce sparsity. We set the remaining slices to be the complex conjugates of their earlier counterparts so that the inverse transformed tensor is real‑valued on the original domain. Finally, we apply the inverse Fourier transform to obtain $\mathcal{T}_0$ with the desired $(r,K,s)$-\textsf{FLoST} structure. 

In the simulation, we add independent, pixel-wise Gaussian noise to every tensor entry, $\calE_{ijt}\sim\mathcal{N}(0,\sigma^{2})$, with $\sigma=0.1$. Observations are then drawn uniformly at random: each pixel is marked ``observed'' with probability $p=0.5$ (Panels A and B in Table \ref{tab:simu}) or $p=0.2$ (Panel C). Throughout the study, we fix the first two dimensions at $M=N=100$, the low-frequency rank at $r=5$, and consider three tensor depths $T\in\{100, 500, 1000\}$ with two sparsity cut-off proportions $K=T/10$ (Panels A and C in Table \ref{tab:simu}) and $K=T/20$ (Panel B); sparsity parameter $s\!=\!0.1 \times(\lceil\frac{T+1}{2}\rceil\!-\!K)MN$. Each configuration was replicated 100 times.

Under the simulation setting, we applied Algorithm \ref{alg:main} for \textsf{FLoST} in two variants: true-$K$ model ($\textsf{FLoST}$-1 in Table \ref{tab:simu}) with $K$ set to its data-generating value and no-sparsity model ($\textsf{FLoST}$-2) with $K=\lceil (T+1)/2\rceil$. 
For benchmarking, we use the Riemannian Conjugate Gradient Descent method (RCGD) for low tubal-rank tensor completion \cite{song2023riemannian}. 
RCGD is a non-convex algorithm that has been shown to be more computationally efficient than convex relaxation methods \cite{xu2019fast}, while still enjoying strong theoretical guarantees.
In particular, exact recovery is provably achievable under noiseless observations. 
It achieves high accuracy by exploiting the geometry of the low-tubal-rank Riemannian manifold and maintains computational efficiency through the use of conjugate gradient updates.
For this reason, we consider RCGD a strong baseline for evaluating tensor completion performance.
We implement Algorithm 2\footnote{To prevent slow convergence, we reserve 10 \% of the observed entries as a validation set and stop the iterations when its loss ceases to improve.} of \cite{song2023riemannian}, referring to it as RCGD in Table \ref{tab:simu}. The method is run with the true tubal rank, $r = 5$, matching the low-frequency rank used in data generation. 

Performance is evaluated using RMSE on both the training (observed) set $\Omega$ and the test (unobserved) set $\Omega^C$. In most cases, \textsf{FLoST}-1 (with true $K$) achieves the lowest test RMSE, which is expected since the data-generating process follows the $(r, K, s)$-\textsf{FLoST} structure. Both \textsf{FLoST}-1 and \textsf{FLoST}-2 consistently outperform the RCGD baseline in terms of test RMSE and runtime\footnote{Here, we implement Algorithm~\ref{alg:main} without parallelization.}. The runtime advantage of \textsf{FLoST} becomes more substantial as the temporal dimension $T$ increases. Notably, the similar test RMSEs of \textsf{FLoST}-1 and \textsf{FLoST}-2 suggest that the method is robust to over-parametrization in $K$, offering flexibility in practice. On the other hand, RCGD tends to achieve lower training RMSE but higher test RMSE, indicating overfitting. Overall, \textsf{FLoST} delivers more accurate and computationally efficient reconstructions, particularly for tensors with long temporal spans.


\begin{table}[H] \label{tab:simu}
\centering
\caption{Mean (± s.d.) test RMSE, train RMSE, and runtime (in seconds) for three tensor lengths $T$ across three parameter settings. \textsf{FLoST}-1 uses the true $K$; \textsf{FLoST}-2 sets $K$ to be maximum (no sparsity); RCGD is the existing baseline. Boldface highlights the best test RMSE or fastest runtime in each column (100 replicates).}

\small
\setlength{\tabcolsep}{5pt}
\textbf{Panel A:} $(p\!=\!0.5\;,K\!=\!T/10)$ \\[2pt]
\begin{tabular}{
  l                       
 *{9}{c} 
}
\toprule
Method &
\multicolumn{3}{c}{$T=100$} &
\multicolumn{3}{c}{$T=500$} &
\multicolumn{3}{c}{$T=1000$}\\
\cmidrule(lr){2-4}\cmidrule(lr){5-7}\cmidrule(lr){8-10}
 & {Test} & {Train} & {Time} &
   {Test} & {Train} & {Time} &
   {Test} & {Train} & {Time}\\
\midrule
\textsf{FLoST‑1} 
&\textbf{\meanstd{0.530052}{0.013304}}&\meanstd{0.515617}{0.025187}&\textbf{\runtime{0.132}}
&\textbf{\meanstd{0.525928}{0.014556}}&\meanstd{0.510799}{0.025437}&\textbf{\runtime{1.017}}
&\textbf{\meanstd{0.533788}{0.009990}}&\meanstd{0.524889}{0.017314}&\textbf{\runtime{2.185}}
 \\ \textsf{FLoST‑2} 
&\meanstd{0.539930}{0.000407}&\meanstd{0.529661}{0.000424}&\runtime{0.211}
&\meanstd{0.538740}{0.000179}&\meanstd{0.528569}{0.000249}&\runtime{1.222}
&\meanstd{0.538240}{0.000133}&\meanstd{0.528034}{0.000164}&\runtime{2.585}
 \\ RCGD 
&\meanstd{0.581062}{0.001249}&\meanstd{0.445475}{0.000400}&\runtime{1.090}
&\meanstd{0.578015}{0.001142}&\meanstd{0.443538}{0.000173}&\runtime{7.743}
&\meanstd{0.577011}{0.001105}&\meanstd{0.442880}{0.000130}&\runtime{16.725} \\
\bottomrule
\end{tabular}

\vspace{0.8em}
\textbf{Panel B:} $(p\!=\!0.5\;,K\!=\!T/20)$ \\[2pt]
\begin{tabular}{ l*{9}{c} }
\toprule
Method & \multicolumn{3}{c}{$T=100$} & \multicolumn{3}{c}{$T=500$} &\multicolumn{3}{c}{$T=1000$}\\
\cmidrule(lr){2-4}\cmidrule(lr){5-7}\cmidrule(lr){8-10}
 & {Test} & {Train} & {Time} & {Test} & {Train} & {Time} & {Test} & {Train} & {Time}\\
\midrule
\textsf{FLoST‑1} 
&\meanstd{0.558890}{0.000396}&\meanstd{0.556274}{0.000386}&\textbf{\runtime{0.108}}
&\meanstd{0.557608}{0.000178}&\meanstd{0.554913}{0.000190}&\textbf{\runtime{0.907}}
&\meanstd{0.556964}{0.000117}&\meanstd{0.554220}{0.000109}&\textbf{\runtime{2.017}}
 \\ \textsf{FLoST‑2} 
&\textbf{\meanstd{0.558735}{0.000393}}&\meanstd{0.550435}{0.000408}&\runtime{0.215}
&\textbf{\meanstd{0.557460}{0.000177}}&\meanstd{0.549246}{0.000222}&\runtime{1.256}
&\textbf{\meanstd{0.556811}{0.000116}}&\meanstd{0.548560}{0.000136}&\runtime{2.680}
\\ RCGD 
&\meanstd{0.618086}{0.001288}&\meanstd{0.469623}{0.000436}&\runtime{1.139}
&\meanstd{0.615001}{0.001282}&\meanstd{0.467674}{0.000173}&\runtime{8.036}
&\meanstd{0.613894}{0.001256}&\meanstd{0.466947}{0.000141}&\runtime{16.685}
\\ \bottomrule
\end{tabular}

\vspace{0.8em}
\textbf{Panel C:} $(p\!=\!0.2\;,K\!=\!T/10)$ \\[2pt]
\begin{tabular}{ l*{9}{c} }
\toprule
Method & \multicolumn{3}{c}{$T=100$} & \multicolumn{3}{c}{$T=500$} &\multicolumn{3}{c}{$T=1000$}\\
\cmidrule(lr){2-4}\cmidrule(lr){5-7}\cmidrule(lr){8-10}
 & {Test} & {Train} & {Time} & {Test} & {Train} & {Time} & {Test} & {Train} & {Time}\\
\midrule
\textsf{FLoST‑1}  
&\textbf{\meanstd{0.545575}{0.000190}}&\meanstd{0.539862}{0.000808}&\textbf{\runtime{0.146}}
&\textbf{\meanstd{0.544367}{0.000103}}&\meanstd{0.538453}{0.000390}&\textbf{\runtime{1.066}}
&\textbf{\meanstd{0.543857}{0.000079}}&\meanstd{0.537957}{0.000268}&\textbf{\runtime{2.198}}
 \\ \textsf{FLoST‑2} 
&\meanstd{0.545668}{0.000194}&\meanstd{0.527733}{0.000876}&{\runtime{0.229}}
&\meanstd{0.544471}{0.000101}&\meanstd{0.526758}{0.000430}&\runtime{1.234}
&\meanstd{0.543959}{0.000078}&\meanstd{0.526302}{0.000307}&\runtime{2.577}
\\ RCGD 
&\meanstd{0.633714}{0.001966}&\meanstd{0.377490}{0.000855}&\runtime{1.204}
&\meanstd{0.630746}{0.001839}&\meanstd{0.375577}{0.000420}&\runtime{8.075}
&\meanstd{0.629794}{0.001768}&\meanstd{0.375057}{0.000378}&\runtime{16.282} \\ 
\bottomrule
\end{tabular}
\end{table}

\subsection{Results for TEC Completion} \label{sec:tec}
We apply our method to the TEC completion problem introduced in Section \ref{sec:Introduction}. The data come from the open‑access VISTA TEC database \cite{sun2023complete,sunData}, which is fully imputed using the VISTA algorithm\cite{sun2022matrix}. Its spatial–temporal resolution is $1^{\circ}\text{(latitude)}\times 1^{\circ}\text{(longitude)}\times 5 \text{(minutes)}$.

In the experiment, we study a six‑day period, 21–26 June 2019. The TEC video in this time window forms a third‑order tensor of dimension $181\times361\times1728$. For each map, the second‑dimension coordinates are converted from longitudes to local time (LT), and the columns are circularly shifted so that noon (12 LT) always sits at the center. This sun‑facing view is more stationary because the highest TEC values stay clustered near the center of the maps. This stationarity fits the \textsf{FLoST} model well and helps leverage its efficiency. The resulting VISTA TEC tensor, with the time dimension $T=1728$, serves as the ground truth in our study. 

To test our method, we randomly mask 50\% of the ground‑truth tensor. Each pixel is removed independently with probability 0.5. This missing rate matches that observed in real TEC maps \cite{sun2022matrix}. Within the \textsf{FLoST} framework, we assume the ranks of the first $K$ low-frequency components are uniformly 5, and we tested three values of the sparsity cut‑offs $K\in\{100,400,865\}$. Because $\lceil\frac{T+1}{2}\rceil=865$, the choice $K=865$ corresponds to the degenerate model without sparsity. The loss‑function weights, $\{\lambda_{1,l}\}_{l=1}^K$ and $\lambda_2$, are tuned by $\texttt{Bayesian Optimization}$ (package \cite{BayesOpt}) on a validation set containing 10\% of the observed entries. 

Figure \ref{fig:TEC1} presents, for four representative timestamps, the imputed maps produced by \textsf{FLoST}, the ground‑truth VISTA maps, and the input maps with missing entries. These comparisons show that \textsf{FLoST} successfully recovers the main spatial patterns and that the performance is robust with different choices of the sparsity cut‑offs.

\begin{figure}[H]
    \centering
    \includegraphics[width=1\linewidth]{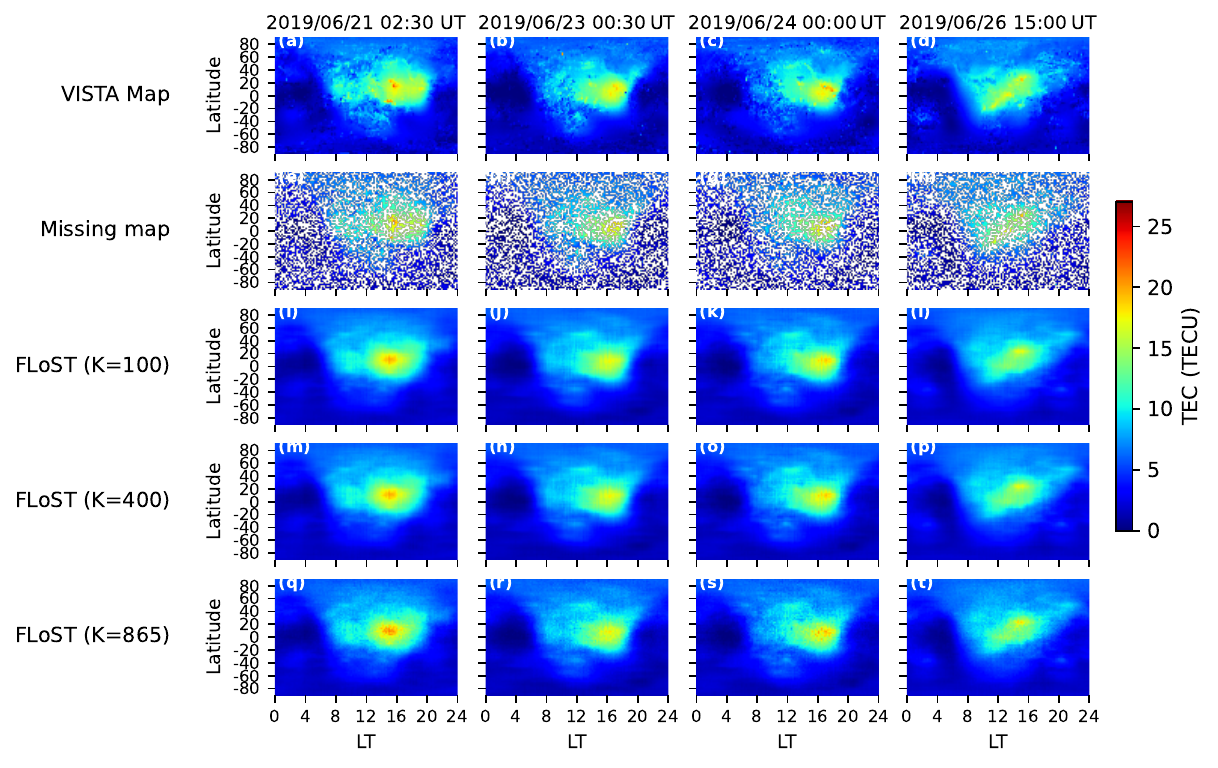}
    \caption{Total electron content (TEC) maps for four representative times between 21–26 June 2019. The VISTA reference (top) is compared with reconstructions from $\textsf{FLoST}$ with $K=100$, $K=400$ and $K=865$; where $K=865$ corresponds to the model without high‑frequency sparsity. Columns list UTC timestamps; horizontal axis is local time (LT), vertical axis latitude. Color bar gives TEC in TEC Unit (TECU).}
    \label{fig:TEC1}
\end{figure}

\begin{figure}[ht]
    \centering
    \includegraphics[width=1\linewidth]{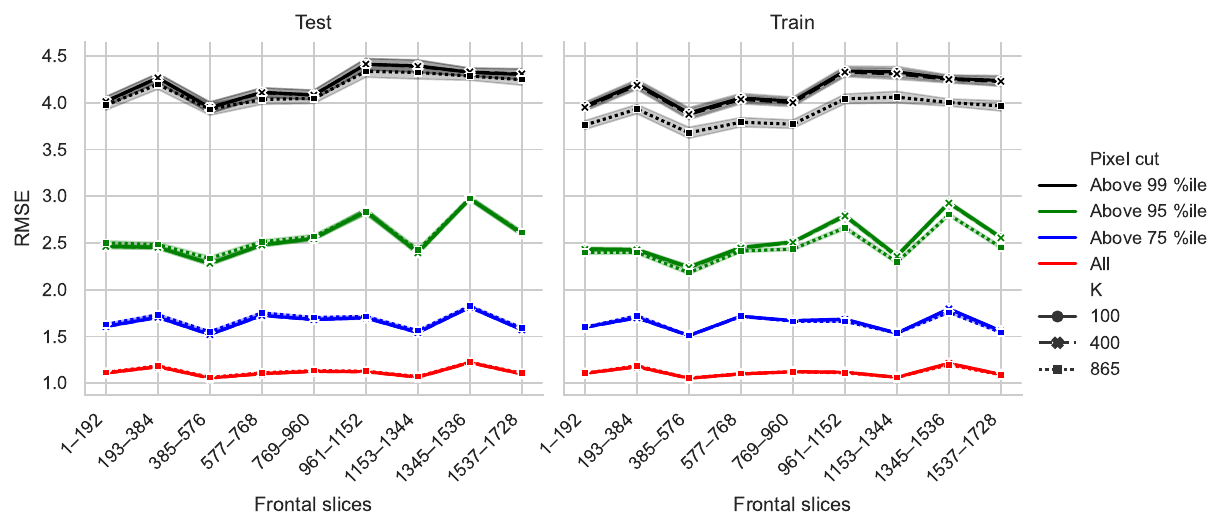}
    \caption{Root‑mean‑square error (RMSE) of the reconstructed tensor per 192‑frame interval. The two facets separate results on the training (observed) and test (missing) data. Colors indicate the subset of pixels used for RMSE evaluation: all pixels or only those whose true value exceeds the 75th, 95th, or 99th percentiles of the ground‑truth tensor. Line style encodes the $K$ value of $\textsf{FLoST}$; $K=865$ corresponds to the model without high‑frequency sparsity. Curves are means of 100 runs with ± 3 s.d. bands. For scale reference, the largest true‑tensor pixel value rounds to 31. }
    \label{fig:TEC2}
\end{figure}

To further evaluate performance, we report the RMSE on various pixel groups. 
To calculus RMSEs, we first cut the long tensor into nine chunks, each containing 192 timestamps, and then split the tensor into training (observed) and test (withheld) pixels. 
Within each split we isolate high‑TEC pixels, those above the 75th, 95th, and 99th empirical percentiles. RMSE is calculated on the pixels that are above each threshold. 
Figure \ref{fig:TEC2} shows that RMSE values computed on all pixels (threshold at the 0 \% quantile) are always below 1.5, which is small relative to the true maximum of about 31.

The wide gaps between the colored curves indicate that RMSE increases sharply as the evaluation threshold rises, meaning that underestimating the largest-magnitude entries contributes more heavily to the total error. Across different truncation settings, the curves for $K = 100$ and $K = 400$ almost entirely overlap, demonstrating that \textsf{FLoST} is robust to the choice of the sparsity cut-off $K$. The only notable deviation occurs for the non-sparse model ($K = 865$), which achieves slightly lower training RMSE on the ``above 99\%ile'' subset but exhibits no improvement in test RMSE compared to the sparser models. This suggests potential overfitting when using excessively large $K$, and indicates that introducing sparsity not only reduces the number of parameters but also improves generalization to unseen data.

\section{Conclusion and Discussion} 
In this work, we propose a new tensor structure, Fourier Low-rank and Sparse Tensor (\name), along with an efficient estimator for completing tensors that follow this model. By modeling low-frequency components as low-rank and high-frequency components as sparse, \name~removes the redundancy inherent in existing low tubal-rank models and better aligns with the physical characteristics of real-world spatiotemporal data such as total electron content (TEC) maps. We developed an efficient estimator for \name~ and established sharp theoretical guarantees. Our analysis recovers and improves existing results for noisy low tubal-rank tensor completion. Numerical experiments further confirm that \name~ achieves higher accuracy and lower complexity compared to baseline methods.

A limitation of the current work lies in the assumption on the missingness pattern—we assume that each tensor entry is observed independently with some fixed probability. However, real-world data, such as TEC maps, often exhibit structured missingness, where entire spatial or temporal blocks are missing due to sensor outages or coverage gaps. While there has been progress on developing algorithms that can handle such patterns, the theoretical understanding remains limited. Extending the analysis of \name~to accommodate structured missingness poses significant challenges and is a compelling direction for future research.

\bibliographystyle{plain}
\bibliography{reference.bib}

\begin{thebibliography}{10}

\bibitem{cai2022nonconvex}
Changxiao Cai, Gen Li, H~Vincent Poor, and Yuxin Chen.
\newblock Nonconvex low-rank tensor completion from noisy data.
\newblock {\em Operations Research}, 70(2):1219--1237, 2022.

\bibitem{cai2010singular}
Jian-Feng Cai, Emmanuel~J Cand{\`e}s, and Zuowei Shen.
\newblock A singular value thresholding algorithm for matrix completion.
\newblock {\em SIAM Journal on optimization}, 20(4):1956--1982, 2010.

\bibitem{cai2022tensor}
Jian-Feng Cai, Wen Huang, Haifeng Wang, and Ke~Wei.
\newblock Tensor completion via tensor train based low-rank quotient geometry
  under a preconditioned metric.
\newblock {\em arXiv preprint arXiv:2209.04786}, 2022.

\bibitem{cai2022provable}
Jian-Feng Cai, Jingyang Li, and Dong Xia.
\newblock Provable tensor-train format tensor completion by riemannian
  optimization.
\newblock {\em Journal of Machine Learning Research}, 23(123):1--77, 2022.

\bibitem{jain2014provable}
Prateek Jain and Sewoong Oh.
\newblock Provable tensor factorization with missing data.
\newblock {\em Advances in Neural Information Processing Systems}, 27, 2014.

\bibitem{jiang2019robust}
Qiang Jiang and Michael Ng.
\newblock Robust low-tubal-rank tensor completion via convex optimization.
\newblock In {\em IJCAI}, pages 2649--2655, 2019.

\bibitem{koltchinskii2011nuclear}
Vladimir Koltchinskii, Karim Lounici, and Alexandre~B Tsybakov.
\newblock Nuclear-norm penalization and optimal rates for noisy low-rank matrix
  completion.
\newblock {\em The Annals of Statistics}, pages 2302--2329, 2011.

\bibitem{li2023online}
Jingyang Li, Jian-Feng Cai, Yang Chen, and Dong Xia.
\newblock Online tensor learning: Computational and statistical trade-offs,
  adaptivity and optimal regret.
\newblock {\em arXiv preprint arXiv:2306.03372}, 2023.

\bibitem{liu2012tensor}
Ji~Liu, Przemyslaw Musialski, Peter Wonka, and Jieping Ye.
\newblock Tensor completion for estimating missing values in visual data.
\newblock {\em IEEE transactions on pattern analysis and machine intelligence},
  35(1):208--220, 2012.

\bibitem{liu2019low}
Xiao-Yang Liu, Shuchin Aeron, Vaneet Aggarwal, and Xiaodong Wang.
\newblock Low-tubal-rank tensor completion using alternating minimization.
\newblock {\em IEEE Transactions on Information Theory}, 66(3):1714--1737,
  2019.

\bibitem{mao2019matrix}
Xiaojun Mao, Song~Xi Chen, and Raymond~KW Wong.
\newblock Matrix completion with covariate information.
\newblock {\em Journal of the American Statistical Association},
  114(525):198--210, 2019.

\bibitem{BayesOpt}
Fernando Nogueira.
\newblock {Bayesian Optimization}: Open source constrained global optimization
  tool for {Python}, 2014--.

\bibitem{rideout2006automated}
William Rideout and Anthea Coster.
\newblock Automated gps processing for global total electron content data.
\newblock {\em GPS solutions}, 10:219--228, 2006.

\bibitem{song2023riemannian}
Guang-Jing Song, Xue-Zhong Wang, and Michael~K Ng.
\newblock Riemannian conjugate gradient descent method for fixed multi rank
  third-order tensor completion.
\newblock {\em Journal of Computational and Applied Mathematics}, 421:114866,
  2023.

\bibitem{sun2023complete}
Hu~Sun, Yang Chen, Shasha Zou, Jiaen Ren, Yurui Chang, Zihan Wang, and Anthea
  Coster.
\newblock Complete global total electron content map dataset based on a video
  imputation algorithm vista.
\newblock {\em Scientific Data}, 10(1):236, 2023.

\bibitem{sun2022matrix}
Hu~Sun, Zhijun Hua, Jiaen Ren, Shasha Zou, Yuekai Sun, and Yang Chen.
\newblock Matrix completion methods for the total electron content video
  reconstruction.
\newblock {\em The Annals of Applied Statistics}, 16(3):1333--1358, 2022.

\bibitem{sunData}
Hu~Sun, Jiaen Ren, Yang Chen, Shasha Zou, Yurui Chang, Zihan Wang, and Anthea
  Coster.
\newblock VISTA TEC Database (ver 2.0) [Data set], University of Michigan -
  Deep Blue Data. https://doi.org/10.7302/jab6-2911.

\bibitem{wang2019noisy}
Andong Wang, Zhihui Lai, and Zhong Jin.
\newblock Noisy low-tubal-rank tensor completion.
\newblock {\em Neurocomputing}, 330:267--279, 2019.

\bibitem{wang2020learning}
Miaoyan Wang and Lexin Li.
\newblock Learning from binary multiway data: Probabilistic tensor
  decomposition and its statistical optimality.
\newblock {\em Journal of Machine Learning Research}, 21(154):1--38, 2020.

\bibitem{xia2019polynomial}
Dong Xia and Ming Yuan.
\newblock On polynomial time methods for exact low-rank tensor completion.
\newblock {\em Foundations of Computational Mathematics}, 19(6):1265--1313,
  2019.

\bibitem{xia2021statistically}
Dong Xia, Ming Yuan, and Cun-Hui Zhang.
\newblock Statistically optimal and computationally efficient low rank tensor
  completion from noisy entries.
\newblock {\em The Annals of Statistics}, 49(1), 2021.

\bibitem{xu2019fast}
Wen-Hao Xu, Xi-Le Zhao, and Michael Ng.
\newblock A fast algorithm for cosine transform based tensor singular value
  decomposition.
\newblock {\em arXiv preprint arXiv:1902.03070}, 2019.

\bibitem{zhang2016exact}
Zemin Zhang and Shuchin Aeron.
\newblock Exact tensor completion using t-svd.
\newblock {\em IEEE Transactions on Signal Processing}, 65(6):1511--1526, 2016.

\end{thebibliography}

\appendix

\section{Proofs}
\subsection{Proof of Theorem \ref{thm:main}}\label{pf:thm:main}
\begin{proof}
	We have the following oracle inequality: 
	\begin{align*}
		&\quad\frac{1}{2}\fro{\hat\calT - p^{-1}\calP_{\Omega}(\calY)}^2 + \sum_{l=1}^K\lambda_{1,l}\nuc{\hat\calT\times_3\f_l^\top} + \lambda_2\lone{\hat\calT\times_3\F_1^\top}\\
		&\leq \frac{1}{2}\fro{\calT_0 - p^{-1}\calP_{\Omega}(\calY)}^2 + \sum_{l=1}^K\lambda_{1,l}\nuc{\calT_0\times_3\f_l^\top} + \lambda_2\lone{\calT_0\times_3\F_1^\top}
	\end{align*}
	Rearrange the terms and we get:
	\begin{align}\label{main:eq}
		&\frac{1}{2}\fro{\hat\calT-\calT_0}^2 \leq \inp{\hat\calT - \calT_0}{p^{-1}\calP_{\Omega}\calE + p^{-1}\calP_{\Omega}\calT_0 - \calT_0}\notag\\
		&\quad + \sum_{l=1}^K\lambda_{1,l}\big(\nuc{\calT_0\times_3\f_l^\top} - \nuc{\hat\calT\times_3\f_l^\top}\big) +\lambda_2\big(\lone{\calT_0\times_3\F_1^\top} - \lone{\hat\calT\times_3\F_1^\top}\big).
	\end{align}
	Notice we have 
	$
	\inp{\calA}{\calB} = \inp{\calA\times_3\F}{\calB\times_3\F} = \sum_{l=1}^T\inp{\calA\times_3\f_l^\top}{\calB\times_3\f_l^\top}.  
	$
	Let the compact SVD of $\calT_0\times_3\f_l^\top$ be $\calT_0\times_3\f_l^\top = \U_l\S_l\V_l^*$ for $l=1,\cdots, K$. And $\calS = \supp(\calT_0\times_3 \F_1^\top)$, $s = |\calS|$.
	Next we consider for $l=1,\cdots,K$,$$\inp{(\hat\calT - \calT_0)\times_3\f_l^\top}{\underbrace{\big(p^{-1}\calP_{\Omega}\calE+ p^{-1}\calP_{\Omega}\calT_0 - \calT_0\big)\times_3\f_l^\top}_{:=\Y_l}}.$$
	We have 
	\begin{align}\label{eq1.1}
		&\quad|\inp{(\hat\calT - \calT_0)\times_3\f_l^\top}{\Y_l}|\notag\\
		&\leq |\inp{\calP_{\TT_l}\big((\hat\calT - \calT_0)\times_3\f_l^\top\big)}{\calP_{\TT_l}\Y_l}| + |\inp{\calP_{\TT_l}^{\perp}\big((\hat\calT - \calT_0)\times_3\f_l^\top\big)}{\calP_{\TT_l}^{\perp}\Y_l}|\notag\\
		&\leq \fro{\calP_{\TT_l}\big((\hat\calT - \calT_0)\times_3\f_l^\top\big)}\cdot \fro{\calP_{\TT_l}\Y_l} + \nuc{\calP_{\TT_l}^{\perp}\big((\hat\calT - \calT_0)\times_3\f_l^\top\big)}\cdot\op{\calP_{\TT_l}^{\perp}\Y_l}\notag\\
		&\leq \fro{\calP_{\TT_l}\big((\hat\calT - \calT_0)\times_3\f_l^\top\big)}\cdot \sqrt{2r_l}\op{\Y_l} + \nuc{\calP_{\TT_l}^{\perp}\big(\hat\calT\times_3\f_l^\top\big)}\cdot\op{\Y_l}
	\end{align}
	On the other hand, from Lemma \ref{lemma:lowrank:subgrad}, we have 
	\begin{align}\label{eq1.2}
		\nuc{\calT_0\times_3\f_l^\top} - \nuc{\hat\calT\times_3\f_l^\top}\leq \nuc{\U_l^*\cdot(\calT_0 - \hat\calT)\times_3\f_l^\top\cdot\V_l} - \nuc{\calP_{\TT_l}^{\perp}\big(\hat\calT\times_3\f_l^\top\big)}.
	\end{align}
	Here we use the fact that $\calP_{\TT_l}^{\perp}(\cdot) = \calP_{\U_l}^{\perp}(\cdot)\calP_{\V_l}^{\perp}$. 
	
	Now we consider  
	$
	\inp{(\hat\calT - \calT_0)\times_3\F_1^\top}{\underbrace{\big(p^{-1}\calP_{\Omega}\calE+ p^{-1}\calP_{\Omega}\calT_0 - \calT_0\big)\times_3\F_1^\top}_{:=\bcalY_1}}.
	$
	In fact, 
	\begin{align}\label{eq2.1}
		&\quad|\inp{(\hat\calT - \calT_0)\times_3\F_1^\top}{\bcalY_1}| \notag\\
		&\leq  |\inp{\calP_{\calS}\big((\hat\calT - \calT_0)\times_3\F_1^\top\big)}{\calP_{\calS}(\bcalY_1)}| + |\inp{\calP_{\calS}^{\perp}\big((\hat\calT - \calT_0)\times_3\F_l^\top\big)}{\calP_{\calS}^{\perp}\bcalY_1}|\notag\\
		&\leq \fro{\calP_{\calS}\big((\hat\calT - \calT_0)\times_3\F_1^\top\big)}\cdot\fro{\calP_{\calS}(\bcalY_1)} + \lone{\calP_{\calS}^{\perp}\big(\hat\calT\times_3\F_1^\top\big)}\cdot\linf{\calP_{\calS}^{\perp}(\bcalY_1)}\notag\\
		&\leq \big(\sqrt{s}\fro{\calP_{\calS}\big((\hat\calT - \calT_0)\times_3\F_1^\top\big)} + \lone{\calP_{\calS}^{\perp}\big(\hat\calT\times_3\F_1^\top\big)}\big)\cdot\linf{\bcalY_1}\notag\\
		&\leq \big(\sqrt{s}\fro{(\hat\calT - \calT_0)\times_3\F_1^\top} + \lone{\calP_{\calS}^{\perp}\big(\hat\calT\times_3\F_1^\top\big)}\big)\cdot\linf{\bcalY_1},
	\end{align}
	where in the second inequality we use the fact that $|\inp{\x}{\y}|\leq \lone{\x}\cdot\linf{\y}$ for complex vectors $\x,\y$. 
	Moreover, from Lemma \ref{lemma:sparse:subgrad}, we have 
	\begin{align}\label{eq2.2}
		\lone{\calT_0\times_3\F_1^\top} - \lone{\hat\calT\times_3\F_1^\top}\leq \lone{\calP_{\calS}\big((\calT_0-\hat\calT)\times_3\F_1^\top\big)} - \lone{\calP_{\calS}^{\perp}\big(\hat\calT\times_3\F_1^\top\big)}.
	\end{align}
	Now we plug \eqref{eq1.1}-\eqref{eq2.2} into \eqref{main:eq}, and we get 
	\begin{align*}
		&\quad\frac{1}{2}\fro{\hat\calT-\calT_0}^2 + \sum_{l=1}^K\lambda_{1,l}\nuc{\calP_{\TT_l}^{\perp}\big(\hat\calT\times_3\f_l^\top\big)} + \lambda_2\lone{\calP_{\calS}^{\perp}\big(\hat\calT\times_3\F_1^\top\big)}\\
		&\leq \sum_{l=1}^K\lambda_{1,l}\sqrt{r_l}\fro{(\calT_0 - \hat\calT)\times_3\f_l^\top} + \lambda_2\sqrt{2s}\fro{(\calT_0-\hat\calT)\times_3\F_1^\top}\\
		&\quad + \sum_{l=1}^K2\op{\Y_l}\big(\sqrt{2r_l}\fro{(\hat\calT - \calT_0)\times_3\f_l^\top} + \nuc{\calP_{\TT_l}^{\perp}\big(\hat\calT\times_3\f_l^\top\big)}\big)\\
		&\quad + \linf{\bcalY_1}\big(\sqrt{s}\fro{(\hat\calT - \calT_0)\times_3\F_1^\top} + \lone{\calP_{\calS}^{\perp}\big(\hat\calT\times_3\F_1^\top\big)}\big). 
	\end{align*}
	Rearrange the terms, and we get, 
	\begin{align*}
		&\quad\frac{1}{2}\fro{\hat\calT-\calT_0}^2 + \sum_{l=1}^K\big(\lambda_{1,l}-2\op{\Y_l}\big)\nuc{\calP_{\TT_l}^{\perp}\big(\hat\calT\times_3\f_l^\top\big)} + \big(\lambda_2-\linf{\bcalY_1}\big)\lone{\calP_{\calS}^{\perp}\big(\hat\calT\times_3\F_1^\top\big)}\\
		&\leq \sum_{l=1}^K\big(\lambda_{1,l}+2\sqrt{2}\op{\Y_l}\big)\sqrt{r_l}\fro{(\calT_0 - \hat\calT)\times_3\f_l^\top} + \big(\sqrt{2}\lambda_2+ \linf{\bcalY_1}\big)\sqrt{s}\fro{(\calT_0-\hat\calT)\times_3\F_1^\top}.
	\end{align*}
	
	Finally, as long as we choose $\frac{1}{2}\lambda_{1,l}\geq 2\op{\Y_l}$, $\frac{1}{2}\lambda_2\geq \linf{\bcalY_1}$, we get 
	\begin{align*}
		\frac{1}{2}\fro{\hat\calT-\calT_0}^2 
		&\leq \sum_{l=1}^K2\lambda_{1,l}\sqrt{r_l}\fro{(\calT_0 - \hat\calT)\times_3\f_l^\top} + 2\lambda_2\sqrt{s}\fro{(\calT_0-\hat\calT)\times_3\F_1^\top}\\
		&\leq (\sum_{l=1}^K4\lambda_{1,l}^2r_l + 4\lambda_2^2s)^{1/2}(\fro{(\calT_0 - \hat\calT)\times_3\f_l^\top}^2+ \fro{(\calT_0-\hat\calT)\times_3\F_1^\top}^2)^{1/2}\\
		&\leq (\sum_{l=1}^K4\lambda_{1,l}^2r_l + 4\lambda_2^2s)^{1/2}\fro{\hat\calT-\calT_0},
	\end{align*}
	where the second inequality is due to Cauchy-Schwarz inequality. 
	Notice the condition $\frac{1}{2}\lambda_{1,l}\geq 2\op{\Y_l}$, $\frac{1}{2}\lambda_2\geq \linf{\bcalY_1}$ can be satisfied with high probability from Lemma \ref{lemma:Y}.
	And this finishes the proof. 

\end{proof}

\section{Technical Lemmas}
	\begin{lemma}\label{lemma:lowrank:subgrad}
	Let $\X_0\in\RR^{M\times N}$ be a rank $r$ matrix with the compact SVD $\X_0 = \U\bSigma \V^*\subset[M]\times[N]$, we have 
	\begin{align*}
		\nuc{\X_0} - \nuc{\X} \leq \nuc{\U^*(\X - \X_0)\V} - \nuc{\calP_{\U}^{\perp}(\X - \X_0)\calP_{\V}^{\perp}}
	\end{align*}
	for any $\X\in\RR^{M\times N}$.
\end{lemma}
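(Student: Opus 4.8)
The plan is to prove this through the variational characterization of the nuclear norm together with an explicit dual certificate tailored to $\X_0$. Write $\D = \X - \X_0$ and recall that $\nuc{\M} = \max_{\op{\Z}\le 1}\inp{\Z}{\M}$ for any real matrix $\M$, with a maximizer of the form $\tilde\U\tilde\V^*$ obtained from a compact SVD $\M = \tilde\U\tilde\bSigma\tilde\V^*$. Throughout I use $\calP_\U = \U\U^*$, $\calP_\U^\perp = \I - \U\U^*$, and likewise for $\V$, so that $\calP_\U^\perp\D\calP_\V^\perp$ is exactly the component of $\D$ lying in the orthogonal complement of the tangent space to the rank-$r$ manifold at $\X_0$.

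First I would construct the certificate. Let $\calP_\U^\perp\D\calP_\V^\perp = \tilde\U\tilde\bSigma\tilde\V^*$ be a compact SVD and set $\W_0 = \tilde\U\tilde\V^*$, so that $\op{\W_0}\le 1$ and $\inp{\W_0}{\calP_\U^\perp\D\calP_\V^\perp} = \nuc{\calP_\U^\perp\D\calP_\V^\perp}$. Define $\Z = \U\V^* + \W_0$. The structural point is that the column and row spaces of $\W_0$ lie in $\mathrm{range}(\U)^\perp$ and $\mathrm{range}(\V)^\perp$, while those of $\U\V^*$ lie in $\mathrm{range}(\U)$ and $\mathrm{range}(\V)$; since these ranges are mutually orthogonal, the operator norms do not add and $\op{\Z} = \max\{\op{\U\V^*},\op{\W_0}\} = \max\{1,\op{\W_0}\}\le 1$. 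Hence $\Z$ is admissible and $\nuc{\X}\ge\inp{\Z}{\X} = \inp{\U\V^* + \W_0}{\X_0 + \D}$.

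Then I would expand this inner product and simplify the cross terms. Using $\inp{\U\V^*}{\X_0} = \mathrm{Tr}(\bSigma) = \nuc{\X_0}$, the orthogonality $\inp{\W_0}{\X_0} = 0$ (since $\W_0 = \calP_\U^\perp\W_0\calP_\V^\perp$ while $\X_0 = \calP_\U\X_0\calP_\V$), and $\inp{\W_0}{\D} = \inp{\W_0}{\calP_\U^\perp\D\calP_\V^\perp} = \nuc{\calP_\U^\perp\D\calP_\V^\perp}$, the bound becomes
$$\nuc{\X}\;\ge\;\nuc{\X_0} + \inp{\U\V^*}{\D} + \nuc{\calP_\U^\perp\D\calP_\V^\perp}.$$
Rearranging gives $\nuc{\X_0} - \nuc{\X}\le -\inp{\U\V^*}{\D} - \nuc{\calP_\U^\perp\D\calP_\V^\perp}$, so it remains to control $-\inp{\U\V^*}{\D}$. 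Here I use $\inp{\U\V^*}{\D} = \mathrm{Tr}(\V\U^*\D) = \mathrm{Tr}(\U^*\D\V)$ together with the elementary inequality $|\mathrm{Tr}(\M)|\le\nuc{\M}$ applied to the square matrix $\M = \U^*\D\V$, yielding $-\inp{\U\V^*}{\D}\le\nuc{\U^*\D\V}$. Substituting $\D = \X - \X_0$ then produces exactly the claimed inequality.

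The main obstacle is the certificate step—specifically verifying $\op{\Z}\le 1$, which rests entirely on $\U\V^*$ and $\W_0$ having pairwise orthogonal column and row spaces so that their spectral norms do not combine. Once that orthogonality is established, the vanishing of the cross term $\inp{\W_0}{\X_0}$ and the trace bound are routine. An equivalent route invokes the subdifferential $\partial\nuc{\X_0} = \{\U\V^* + \W : \calP_\U\W = \W\calP_\V = 0,\ \op{\W}\le 1\}$ and selects the optimal $\W$; I prefer the explicit construction above since it keeps the argument self-contained and makes the orthogonality transparent.
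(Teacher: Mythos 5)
Your proof is correct and is essentially the paper's argument: the paper invokes the subdifferential $\partial\nuc{\X_0} = \{\U\V^* + \W\}$ and picks $\W$ aligned with $\calP_{\U}^{\perp}(\X-\X_0)\calP_{\V}^{\perp}$, which is exactly your certificate $\Z = \U\V^* + \W_0$, and both proofs close with the same duality bound $|\inp{\U\V^*}{\X-\X_0}|\leq \nuc{\U^*(\X-\X_0)\V}$ on the tangent-space term. The only difference is that you verify the admissibility of the certificate ($\op{\Z}\leq 1$, the vanishing cross terms) from scratch rather than citing the known form of the nuclear-norm subdifferential, which makes the argument self-contained but not substantively different.
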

\begin{proof}
	Let $\G\in\partial\nuc{\X_0}$, then $\G = \U\V^* + \W$, where $\op{\W}\leq 1$, $\U^*\W = 0$, $\W^*\V= 0$. We denote $\calP_{\U}(\M) = \U\U^*\M$ for orthogonal matrix $\U$, and $\calP_{\U}^{\perp}(\M) = (I-\U\U^*)\M$. 
	Then we have 
	\begin{align*}
		\nuc{\X} - \nuc{\X_0} &\geq \inp{\X-\X_0}{\G} = \inp{\calP_{\U}(\X - \X_0)\calP_{\V}}{\U\V^*} + \inp{\calP_{\U}^{\perp}(\X - \X_0)\calP_{\V}^{\perp}}{\W}\\
		&\geq -\nuc{\U^*(\X - \X_0)\V} + \nuc{\calP_{\U}^{\perp}(\X - \X_0)\calP_{\V}^{\perp}},
	\end{align*}
	where in the last inequality we use the fact $|\inp{\A}{\B}|\leq \op{\A}\cdot\nuc{\B}$, and $\calP_{\U}^{\perp}(\X - \X_0)\calP_{\V}^{\perp}(\X_0) = 0$ and we can choose $\G$ such that $\inp{\calP_{\U}^{\perp}\X\calP_{\V}^{\perp}}{\W}=\nuc{\calP_{\U}^{\perp}(\X - \X_0)\calP_{\V}^{\perp}}$. 
\end{proof}

\begin{lemma}\label{lemma:sparse:subgrad}
	Let $\X_0\in\RR^{M\times N}$ be with $\supp(\X_0) = \calS\subset[M]\times[N]$, we have 
	\begin{align*}
		\lone{\X_0} - \lone{\X} \leq \lone{\calP_{\calS}(\X_0-\X)} - \lone{\calP_{\calS}^{\perp}\X}
	\end{align*}
	for any $\X\in\RR^{M\times N}$.
\end{lemma}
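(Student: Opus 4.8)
The plan is to mirror the subgradient argument used for the nuclear norm in Lemma \ref{lemma:lowrank:subgrad}, replacing the nuclear/spectral duality by the $\ell_1$/$\ell_\infty$ duality. Since $\lone{\cdot}$ is convex, the inequality will follow from a single first-order (subgradient) bound together with a judicious choice of subgradient.

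First I would recall the structure of the $\ell_1$ subdifferential. Any $\G\in\partial\lone{\X_0}$ can be written as $\G = \sign(\X_0) + \W$, where $\sign(\X_0)$ is supported on $\calS$ and $\W = \calP_{\calS}^{\perp}(\W)$ is an arbitrary matrix supported off $\calS$ with $\linf{\W}\le 1$. Convexity then gives $\lone{\X} - \lone{\X_0} \ge \inp{\X - \X_0}{\G}$. Since $\supp(\X_0)=\calS$ forces $\calP_{\calS}^{\perp}(\X_0)=0$, I would split the right-hand side into its on-support and off-support parts:
\[
\inp{\X - \X_0}{\G} = \inp{\calP_{\calS}(\X - \X_0)}{\sign(\X_0)} + \inp{\calP_{\calS}^{\perp}(\X)}{\W}.
\]

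Next I would bound each term. For the on-support term, the elementary inequality $|\inp{\A}{\B}|\le \lone{\A}\cdot\linf{\B}$ together with $\linf{\sign(\X_0)}\le 1$ yields $\inp{\calP_{\calS}(\X - \X_0)}{\sign(\X_0)} \ge -\lone{\calP_{\calS}(\X - \X_0)}$. For the off-support term I would exploit the freedom in choosing $\W$: taking $W_{ij}=\sign([\X]_{ij})$ on $\calS^{\perp}$ (which satisfies $|W_{ij}|\le 1$) makes the inner product attain $\inp{\calP_{\calS}^{\perp}(\X)}{\W} = \lone{\calP_{\calS}^{\perp}(\X)}$. Combining the two bounds and rearranging gives $\lone{\X_0} - \lone{\X} \le \lone{\calP_{\calS}(\X_0 - \X)} - \lone{\calP_{\calS}^{\perp}(\X)}$, using the symmetry $\lone{\calP_{\calS}(\X-\X_0)} = \lone{\calP_{\calS}(\X_0-\X)}$; this is exactly the claim.

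There is no substantive obstacle here — the argument is a direct $\ell_1$ analogue of the preceding lemma. The only point requiring mild care is the optimal selection of the subgradient component $\W$ on $\calS^{\perp}$, exactly as the nuclear-norm proof chooses $\G$ so that $\inp{\calP_{\U}^{\perp}\X\calP_{\V}^{\perp}}{\W}$ equals the nuclear norm. I would also note that although the statement is phrased for real $\X_0,\X$, the complex norm $\lone{\cdot}$ decouples across real and imaginary parts, so the same bound transfers verbatim to the complex matrices $\hat\calT\times_3\F_1^\top$ encountered in the proof of Theorem \ref{thm:main}.
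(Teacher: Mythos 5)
Your proposal is correct and follows essentially the same argument as the paper's proof: a subgradient inequality for $\lone{\cdot}$, a split into on-support and off-support parts using $\calP_{\calS}^{\perp}(\X_0)=0$, the $\ell_1$/$\ell_\infty$ H\"older bound on the support, and an optimal choice of the off-support subgradient component so that $\inp{\calP_{\calS}^{\perp}(\X)}{\W}=\lone{\calP_{\calS}^{\perp}(\X)}$. Your explicit decomposition $\G=\sign(\X_0)+\W$ merely makes precise what the paper writes informally via $\sign(0)\in[-1,1]$, and your closing remark on the real-to-complex transfer is a useful clarification, since the lemma is applied to complex matrices in Theorem \ref{thm:main}.
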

\begin{proof}
	Let $\G\in\partial\lone{\X_0}$, then $\G = \sign(\X_0)$, where $\sign(0) \in [-1,1]$, and thus $\linf{\G} \leq  1$. 
	Then we have 
	\begin{align*}
		\lone{\X} - \lone{\X_0} &\geq \inp{\X-\X_0}{\G} = \inp{\calP_{\calS}(\X - \X_0)}{\G} + \inp{\calP_{\calS}^{\perp}(\X - \X_0)}{\calP_{\calS}^{\perp}(\G)}\\
		&\geq -\lone{\calP_{\calS}(\X - \X_0)} + \lone{\calP_{\calS}^{\perp}(\X)},
	\end{align*}
	where in the last inequality we use the fact $|\inp{\A}{\B}|\leq \linf{\A}\cdot\lone{\B}$, and $\calP_{\calS}^{\perp}(\X_0) = 0$ and we can choose $\G$ such that $\inp{\calP_{\calS}^{\perp}(\X)}{\calP_{\calS}^{\perp}(\G)} =  \lone{\calP_{\calS}^{\perp}(\X)}$. 
\end{proof}
\begin{lemma}\label{lemma:Y}
	We have for all $l=1,\cdots, K$, 
	\begin{align*}
	    \op{\Y_l}\lesssim \frac{\gamma\vee \sigma}{\sqrt{p}}\sqrt{N\vee M}\sqrt{\log(M\vee N\vee T)} + \frac{\gamma\vee\sigma}{p}\frac{1}{\sqrt{T}}\log^{3/2}(M\vee N\vee T)
	\end{align*}
	holds with probability exceeding $1- (M\vee N\vee T)^{-10}$. 
	And 
	\begin{align*}
		\linf{\bcalY_1} \lesssim (\sigma\vee\gamma)\bigg(\frac{\sqrt{\log(M\vee N\vee T)}}{\sqrt{p}} + \frac{\log(M\vee N\vee T)}{\sqrt{T}p}\bigg)
	\end{align*}
	with probability exceeding $1-(M\vee N\vee T)^{-10}$.
\end{lemma}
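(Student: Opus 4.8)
The plan is to recognize both $\Y_l$ and $\bcalY_1$ as sums of independent, mean-zero random objects carrying the deterministic Fourier weights $[\f_l]_t = \gamma^{(t-1)(l-1)}/\sqrt{T}$, and to apply matrix and scalar Bernstein inequalities. Writing $\xi_{ijt} := p^{-1}\omega_{ijt}\calE_{ijt} + (p^{-1}\omega_{ijt}-1)[\calT_0]_{ijt}$, we have $[\Y_l]_{ij} = \sum_t \xi_{ijt}[\f_l]_t$ and $[\bcalY_1]_{ijk} = \sum_t \xi_{ijt}[\f_{K+k}]_t$, where the $\xi_{ijt}$ are independent across $(i,j,t)$ and satisfy $\EE[\xi_{ijt}]=0$. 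The first computation I would carry out is the second-moment bound $\EE[\xi_{ijt}^2]\le (\sigma^2+\gamma^2)/p\le 2(\sigma\vee\gamma)^2/p$, using $\EE[\omega_{ijt}^2]=p$, $\EE[\calE_{ijt}^2]\le\sigma^2$, $\EE[(p^{-1}\omega_{ijt}-1)^2]=(1-p)/p$, and the independence of $\calE$ from $\omega$ (which kills the cross term). Throughout I would use $|[\f_l]_t|^2 = 1/T$, which is the source of all the $1/\sqrt{T}$ factors.

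For $\op{\Y_l}$ I would decompose $\Y_l = \sum_{i,j,t}\xi_{ijt}[\f_l]_t\,\e_i\e_j^*$ into independent rank-one random matrices and invoke matrix Bernstein. The variance parameter is controlled by the row and column sums: $\op{\sum_{i,j,t}\EE[\xi_{ijt}^2]\,|[\f_l]_t|^2\,\e_i\e_i^*}\le N\cdot 2(\sigma\vee\gamma)^2/p$, and the transpose version by $M\cdot 2(\sigma\vee\gamma)^2/p$, so $v\lesssim (M\vee N)(\sigma\vee\gamma)^2/p$. Since the sub-Gaussian noise makes the summands unbounded, I would first condition on the event $\{\max_{ijt}|\calE_{ijt}|\lesssim\sigma\sqrt{\log(MNT)}\}$ (valid with probability at least $1-(MNT)^{-c}$ by a union bound over sub-Gaussian tails), on which each summand obeys $\op{\xi_{ijt}[\f_l]_t\e_i\e_j^*}\le L$ with $L\lesssim T^{-1/2}p^{-1}(\sigma\vee\gamma)\sqrt{\log(MNT)}$. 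Matrix Bernstein then yields $\op{\Y_l}\lesssim\sqrt{v\log(M\vee N)} + L\log(M\vee N)$, and substituting $v$ and $L$ reproduces the two stated terms, the second one carrying $\log^{3/2}$ precisely because truncating the Gaussian noise costs an extra $\sqrt{\log}$ on top of the Bernstein $\log$.

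For $\linf{\bcalY_1}=\max_{ijk}|[\bcalY_1]_{ijk}|$ I would take a union bound over the $\lesssim MNT$ entries and apply a scalar Bernstein bound to each fixed $(i,j,k)$. The key refinement is to \emph{not} treat $\xi_{ijt}$ as a single bounded variable, but to split it into the sub-Gaussian noise part $p^{-1}\omega_{ijt}\calE_{ijt}[\f_{K+k}]_t$ and the bounded, centered sampling part $(p^{-1}\omega_{ijt}-1)[\calT_0]_{ijt}[\f_{K+k}]_t$. Conditioning on $\{\omega_{ijt}\}$, the noise part is sub-Gaussian with variance proxy $\lesssim\sigma^2/p$ on the likely event $\sum_t\omega_{ijt}\lesssim pT$, contributing only a $\sigma p^{-1/2}\sqrt{\log(MNT)}$ term; the sampling part is a sum of bounded variables with $|a_t|\le p^{-1}\gamma/\sqrt{T}$ and variance $\lesssim\gamma^2/p$, for which Bernstein gives $\gamma p^{-1/2}\sqrt{\log(MNT)} + \gamma p^{-1}T^{-1/2}\log(MNT)$. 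Adding the two reproduces the stated bound and explains why its second term carries $\log^1$ rather than $\log^{3/2}$: only the genuinely bounded part passes through Bernstein's heavy-tail term.

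The main obstacle I anticipate is the careful handling of the unbounded sub-Gaussian noise. For the spectral-norm bound this forces the truncation argument, together with checking that re-centering the truncated noise perturbs the mean by only an exponentially small amount so the bias is negligible; for the entrywise bound it is precisely the decision to split off the conditionally sub-Gaussian part that keeps the second term at $\log^1$. The remaining work — verifying the variance and boundedness inputs to the two Bernstein inequalities, and absorbing the factors $\log(MNT)\asymp\log(M\vee N\vee T)$ together with the polynomially small failure probabilities into the target $1-(M\vee N\vee T)^{-10}$ — is routine bookkeeping once the constants are chosen large enough.
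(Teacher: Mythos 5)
Your overall skeleton matches the paper's: split the error into the noise contribution $p^{-1}\calP_{\Omega}(\calE)$ and the sampling contribution $p^{-1}\calP_{\Omega}(\calT_0)-\calT_0$, control $\op{\Y_l}$ by matrix Bernstein with variance parameter $(M\vee N)(\sigma\vee\gamma)^2/p$, and control $\linf{\bcalY_1}$ entrywise by conditioning on $\{\omega_{ijt}\}$ plus a union bound. The one genuinely different choice is how you handle the unboundedness of the sub-Gaussian noise in the spectral-norm bound: the paper keeps the noise matrix $\Y_{l1}$ and the sampling matrix $\Y_{l2}$ separate and applies an Orlicz-norm version of matrix Bernstein (Proposition~2 of Koltchinskii et al.) directly to the $\psi_2$-controlled summands, whose $\psi_2$ norm $\sigma/(p\sqrt{T})$ produces the $\log^{3/2}$ term with no truncation at all; you instead truncate $\calE_{ijt}$ at level $\sigma\sqrt{\log(MNT)}$ and run the bounded Bernstein inequality on the combined summand $\xi_{ijt}$. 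Both routes give the stated bound; yours is more elementary and self-contained but requires the re-centering step you flag (which does go through, since the truncation bias is polynomially negligible for a large enough truncation constant), while the paper's avoids that bookkeeping entirely.

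There is, however, one step in your entrywise argument that fails as written. You condition the noise part on the event $\sum_t\omega_{ijt}\lesssim pT$, but multiplicative Chernoff concentration of $\sum_t\omega_{ijt}$ around $pT$ only holds when $pT\gtrsim\log(M\vee N\vee T)$, and no such assumption is available (tensor completion is precisely interesting when $p$ is small). The paper's event is the additive one, $\sum_t\omega_{ijt}\leq C\big(pT+\log(M\vee N\vee T)\big)$, which is high-probability in all regimes. With the corrected event, the conditional variance proxy of $p^{-1}\sum_t f_{lt}\omega_{ijt}\epsilon_{ijt}$ becomes $C\big(\sigma^2/p+\sigma^2\log(M\vee N\vee T)/(p^2T)\big)$, so the noise part contributes $\sigma\sqrt{\log(M\vee N\vee T)}/\sqrt{p}+\sigma\log(M\vee N\vee T)/(p\sqrt{T})$ --- that is, the noise also produces the second, $\log^{1}$-order term. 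This contradicts your structural claim that ``only the genuinely bounded part passes through Bernstein's heavy-tail term''; in the sparse regime $pT\lesssim\log(M\vee N\vee T)$ that explanation is simply wrong. The final stated bound is unaffected, because it carries $(\sigma\vee\gamma)$ on both terms, so the fix is local: replace your event by the additive Chernoff event and let the noise part inherit the extra term.
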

\begin{proof}
	For the first part, we can expand $\Y_l$ as follows: 
	\begin{align*}
		\Y_l = \underbrace{p^{-1}\sum_{i,j}\sum_tf_{lt}\omega_{ijt}\epsilon_{ijt}\cdot e_ie_j^\top}_{=:\Y_{l1}} + \underbrace{\sum_{i,j}\sum_tf_{lt}(p^{-1}\omega_{ijt}[\calT_0]_{ijt}-[\calT_0]_{ijt})\cdot e_ie_j^\top}_{=:\Y_{l2}} ,
	\end{align*}
	where $f_{lt} = \f_l^\top e_t$ has magnitude $|f_{lt}| = \frac{1}{\sqrt{T}}$. 
	Then we have 
	\begin{align*}
		\Y_{l1} = \sum_{ijt}p^{-1}f_{lt}\omega_{ijt}\epsilon_{ijt}\cdot e_ie_j^\top.
	\end{align*}
	And thus $\big\|\op{p^{-1}f_{lt}\omega_{ijt}\epsilon_{ijt}\cdot e_ie_j^\top}\big\|_{\psi_2}\leq \frac{\sigma}{p\sqrt{T}}$. For the variance, we have 
	\begin{align*}
		\frac{1}{MNT}\sum_{ijt}\EE p^{-2}|f_{lt}|^2\omega_{ijt}\epsilon_{ijt}^2\cdot e_ie_j^\top e_je_i^\top = \frac{1}{MNT}\frac{1}{p}\sigma^2N I_M.
	\end{align*}
	And similarly
	\begin{align*}
		\frac{1}{MNT}\sum_{ijt}\EE p^{-2}|f_{lt}|^2\omega_{ijt}\epsilon_{ijt}^2\cdot e_je_i^\top e_ie_j^\top  = \frac{1}{MNT}\frac{1}{p}\sigma^2M I_N.
	\end{align*}
	Therefore from matrix Bernstein's inequality (e.g., \cite[Proposition 2]{koltchinskii2011nuclear}), we conclude with probability exceeding $1-(M\vee N\vee T)^{-20}$, 
	\begin{align*}
		\op{\Y_{l1}}&\lesssim \frac{\sigma}{\sqrt{p}}\sqrt{N\vee M}\sqrt{\log(M\vee N\vee T)} + \frac{\sigma}{p}\frac{1}{\sqrt{T}}\log^{3/2}(M\vee N\vee T). 
	\end{align*}
	On the other hand, we have 
	$
	\Y_{l2} =\sum_{i,j}\sum_t f_{lt}[\calT_0]_{ijt}(p^{-1}\omega_{ijt}-1)\cdot e_ie_j^\top. 
	$
	And 
	\begin{align*}
		\op{f_{lt}[\calT_0]_{ijt}(p^{-1}\omega_{ijt}-1)\cdot e_ie_j^\top} \leq \frac{\gamma}{p\sqrt{T}}. 
	\end{align*}
	For the variance, we have 
	\begin{align*}
		\frac{1}{MNT}\sum_{ijt}\EE |f_{lt}|^2[\calT_0]_{ijt}^2(p^{-1}\omega_{ijt}-1)^2\cdot e_ie_j^\top e_je_i^\top \leq \frac{1}{MNT}\frac{1}{p}\gamma^2N I_M, 
	\end{align*}
	and 
	\begin{align*}
		\frac{1}{MNT}\sum_{ijt}\EE |f_{lt}|^2[\calT_0]_{ijt}^2(p^{-1}\omega_{ijt}-1)^2\cdot  e_je_i^\top e_ie_j^\top \leq \frac{1}{MNT}\frac{1}{p}\gamma^2M I_N. 
	\end{align*}
	So we conclude from matrix Bernstein's inequality \cite[Proposition 1]{koltchinskii2011nuclear}, with probability exceeding $1-(M\vee N\vee T)^{-20}$, 
	\begin{align*}
		\op{\Y_{l2}}\lesssim \frac{\gamma}{\sqrt{p}}\sqrt{N\vee M}\sqrt{\log(M\vee N\vee T)} + \frac{\gamma}{p}\frac{1}{\sqrt{T}}\log(M\vee N\vee T).
	\end{align*}
	In conclusion, for $l=1,\cdots, K$, 
	\begin{align*}
		\op{\Y_l} \lesssim \frac{\gamma\vee \sigma}{\sqrt{p}}\sqrt{N\vee M}\sqrt{\log(M\vee N\vee T)} + \frac{\gamma\vee\sigma}{p}\frac{1}{\sqrt{T}}\log^{3/2}(M\vee N\vee T).
	\end{align*}
	
	For the second part, we define the events:
	\begin{align*}
		\calE_{ij} &= \big\{\frac{1}{T}\sum_t\omega_{ijt}\leq C(p + T^{-1}\log(M\vee N\vee T))\big\},
	\end{align*}
	then from Chernoff bound, $\PP(\calE_{ij})\geq 1-(M\vee N\vee T)^{-20}$. 
	
	We can decompose $\bcalY_1$ as follows:
	\begin{align*}
		\bcalY_1 = \underbrace{p^{-1}\calP_{\Omega}(\calE) \times_3 \F_1^\top}_{:=\bcalZ_1} + \underbrace{(p^{-1}\calP_{\Omega}(\calT_0) - \calT_0) \times_3 \F_1^\top}_{:=\bcalZ_2}.
	\end{align*}
	And
	\begin{align*}
		\linf{\bcalZ_{1}} = \max_{ijl}\big|p^{-1}\sum_tf_{lt}\omega_{ijt}\epsilon_{ijt}\big|.
	\end{align*}
	Condition on $\omega_{ijt}$, we have 
	\begin{align*}
		\psitwo{p^{-1}\sum_tf_{lt}\omega_{ijt}\epsilon_{ijt}}^2\leq p^{-2}\sigma^2T^{-1}\sum_t\omega_{ijt},
	\end{align*}
	which has the following upper bound under $\calE_{ij}$:
	\begin{align*}
		\psitwo{p^{-1}\sum_tf_{lt}\omega_{ijt}\epsilon_{ijt}}^2\leq Cp^{-2}\sigma^2\bigg(p + \frac{\log(M\vee N\vee T)}{T}\bigg), 
	\end{align*}
	which leads to, with probability exceeding $1-(M\vee N\vee T)^{-20}$, 
	\begin{align*}
		|p^{-1}\sum_tf_{lt}\omega_{ijt}\epsilon_{ijt}|\leq Cp^{-1}\sigma\big(\sqrt{p} + \sqrt{T^{-1}\log(M\vee N\vee T)}\big)\sqrt{\log(M\vee N\vee T)}. 
	\end{align*}
	Taking union bound and we conclude with probability exceeding $1-MNT(M\vee N\vee T)^{-20}$, 
	\begin{align*}
		\linf{\bcalZ_1}\leq C\sigma\bigg(\frac{\sqrt{\log(M\vee N\vee T)}}{\sqrt{p}} + \frac{\log(M\vee N\vee T)}{\sqrt{T}p}\bigg).
	\end{align*}
	For $\bcalZ_2$, similarly we have 
	\begin{align*}
		\linf{\bcalZ_2} = \max_{ijl}\big|\sum_t(p^{-1}\omega_{ijt}-1)[\calT_0]_{ijt}f_{lt}\big|. 
	\end{align*}
	From Bernstein inequality, we have with probability exceeding $1-(M\vee N\vee T)^{-20}$,
	\begin{align*}
		\big|\sum_t(p^{-1}\omega_{ijt}-1)[\calT_0]_{ijt}f_{lt}\big| \leq C\bigg(\frac{\gamma}{\sqrt{p}}\sqrt{\log(M\vee N\vee T)}  +\frac{\gamma}{p\sqrt{T}}\log(M\vee N\vee T)\bigg)
	\end{align*}
	Taking union bound, and we conclude with probability exceeding $1-MNT(M\vee N\vee T)^{-20}$, 
	\begin{align*}
		\linf{\bcalZ_2}\leq  C\gamma\bigg(\frac{\sqrt{\log(M\vee N\vee T)}}{\sqrt{p}} + \frac{\log(M\vee N\vee T)}{\sqrt{T}p}\bigg). 
	\end{align*}
\end{proof}

\end{document}